\newcommand{\cN}{\mathcal{N}}
\newcommand{\cO}{\mathcal{O}}
\newcommand{\cP}{\mathcal{P}}
\newcommand{\cT}{\mathcal{T}}
\newcommand{\XX}{\mathbb{X}}
\newcommand{\YY}{\mathbb{Y}}
\newcommand{\kl}[2]{\text{KL}(#1 \| #2)}
\newcommand*{\triplenorm}[1]{{\left\vert\kern-0.25ex\left\vert\kern-0.25ex\left\vert #1
    \right\vert\kern-0.25ex\right\vert\kern-0.25ex\right\vert}}
\newcommand{\R}{\mathbb{R}}
\newcommand{\Rd}{\mathbb{R}^d}
\renewcommand{\phi}{\varphi}
\newcommand{\eps}{\varepsilon}
\newcommand{\sse}{\subseteq}
\newcommand*{\E}{\mathbb E}
\DeclareMathOperator{\tr}{tr}
\newcommand*{\defeq}{\coloneqq}
\newcommand*{\rd}{\mathrm{d}}
\newcommand*{\dd}{\, \rd}
\DeclareMathOperator*{\argmin}{argmin}
\newcommand{\OT}{\text{OT}}
\renewcommand{\tilde}{\widetilde}
\newcommand{\TCB}{T_{\rm{CB}}}
\theoremstyle{plain}
\newtheorem{theorem}{Theorem}[section]
\newtheorem{prop}[theorem]{Proposition}
\newtheorem{lemma}[theorem]{Lemma}
\theoremstyle{definition}
\theoremstyle{remark}
\newtheorem{remark}[theorem]{Remark}
\title{Conditional simulation via entropic optimal transport:\\ 
Toward non-parametric estimation of \\ conditional Brenier maps}
\author[1]{Ricardo Baptista\footnote{Contributed equally. Correspondence to {\tt{rsb@caltech.edu}} and {\tt{ap6599@nyu.edu}}}}
\newcommand\CoAuthorMark{\footnotemark[\arabic{footnote}]} 
\author[2]{Aram-Alexandre Pooladian\protect\CoAuthorMark}
\author[3]{Michael Brennan}
\author[3]{Youssef Marzouk}
\author[2,4]{Jonathan Niles-Weed}
\affil[1]{CalTech }
\affil[2]{Center for Data Science, New York University}
\affil[3]{Massachusets Institute of Technology} 
\affil[4]{Courant Institute of Mathematical Sciences, New York University}
\begin{document}
\vspace*{-0.5in}

\begin{center} {\LARGE{{Conditional simulation via entropic optimal transport:\\ Toward non-parametric estimation of conditional Brenier maps}}}

{\large{
\vspace*{.3in}
\begin{tabular}{cccc}
Ricardo Baptista$^{1,\star}$, Aram-Alexandre Pooladian$^{2,\star}$, Michael Brennan$^{3}$, \\
Youssef Marzouk$^{3}$, Jonathan Niles-Weed$^{2,4}$\\
\hfill \\
\end{tabular}
{
\vspace*{.2in}
\begin{tabular}{c}
				$^1$California Institute of Technology\\
				$^2$Center for Data Science, New York University \\
                $^3$Massachusetts Institute of Technology \\
                $^4$Courant Institute of Mathematical Sciences, New York University\\
\end{tabular} 
}

}}
\vspace*{.1in}

\today

\end{center}

\begin{abstract}
Conditional simulation is a fundamental task in statistical modeling: 
given a finite collection of samples from a joint distribution, it consists in generating samples from conditionals of this distribution.
One promising approach is to construct conditional Brenier maps, where the components of the map push forward a reference distribution to conditionals of the target.
While many estimators exist, few, if any, come with statistical or algorithmic guarantees.
To this end, we propose a non-parametric estimator for conditional Brenier maps based on the computational scalability of \emph{entropic} optimal transport. Our estimator leverages a result of \citet{carlier2010knothe}, which shows that optimal transport maps under a rescaled quadratic cost asymptotically converge to conditional Brenier maps; our estimator is precisely the entropic analogues of these converging maps.
We provide heuristic justifications for choosing the scaling parameter in the cost as a function of the number of samples by fully characterizing the Gaussian setting. We conclude by 
comparing the performance of the estimator to other machine learning and non-parametric approaches on benchmark datasets and Bayesian inference problems.\looseness-1
\end{abstract}
\footnotetext{$^\star$Equal contribution. Correspondance to {\tt{rsb@caltech.edu}} and {\tt{ap6599@nyu.edu}}.}

\section{Introduction}
Given access to i.i.d.~samples from a joint distribution $\mu \in \cP(\R^{d_1}\!\times\!\R^{d_2})$, our goal is to generate samples distributed according to the conditional distribution $\mu_{2|1}(\cdot|x_1) \defeq \mu(\cdot,x_1)/\mu_1(x_1)$ (where $\mu_1$ is the first marginal of $\mu$) for any $x_1 \in \R^{d_1}$. This sampling problem lies at the core of computational Bayesian inference, where the joint distribution is specified by a model for observations $X_1$ and parameters $X_2$. 
The goal of simulation-based Bayesian inference is to sample from the posterior distribution, i.e., the conditional distribution $\mu_{2|1}(\cdot|x_1)$ corresponding to an observation $x_1$, given samples from the joint distribution~\citep{cranmer2020frontier}.\footnote{Joint data in this setting are cheaply obtained  by sampling $x_2$ from its marginal (prior) distribution $\mu_2$ and $x_1$ from the specified likelihood model $\mu_{1|2}(\cdot|x_2)$.}

\paragraph{Sampling via transport.} One natural framework for performing conditional simulation uses \emph{measure transport} \citep{marzouk2016sampling} by seeking a transport map that pushes forward a known source distribution to the conditional $\mu_{2|1}(\cdot|x_1)$ for any realization of the conditioning variable $x_1$. 

Many transport approaches for conditional simulation obey the following construction: 
Let $\rho = \rho_1 \otimes \rho_2$ be a tensor-product source measure that is easy to sample from (e.g., the standard Gaussian) and consider 
a transport map $T$ from $\rho$ to $\mu$ of the form
\begin{align}\label{eq:cond_brenier}
    T(x) = \begin{bmatrix*}[l]
        T^1(x_1) \\
        T^2(x_2;x_1)
    \end{bmatrix*},\,
\end{align}
Theorem 2.4 of~\cite{kovachki2020conditional} shows that 
$T^1$ transports $\rho_1$ to $\mu_1$, then 
$T^{2}$ transports $\rho_2$ to $\mu_{2|1}$.  
In particular, for $\rho_1$-a.e.\thinspace $x_1$, 
\begin{align} \label{eq:conditional_samplingT2}
    T^2(X_2;x_1) \sim \mu_{2|1}(\cdot|T^1(x_1))\,, \quad X_2 \sim \rho_2\,.
\end{align}
Thus, maps of the form in~\eqref{eq:cond_brenier} can perform conditional simulation. Several methods successfully learn these maps given only information from the joint distribution $\mu$. These include conditional normalizing flows and  
diffusion models; see Section~\ref{sec:related_work} for some examples. 
However, most existing approaches (i) do not provide explicit guarantees 
in terms of the number of samples required to obtain a good estimator,  
and (ii) are based on parametric models, the successful use of which requires tuning an unreasonable number of parameters. The latter is especially costly when using neural networks as the proposed estimator. 

\paragraph{Conditional Brenier maps.} 
Among all maps of the form in~\eqref{eq:cond_brenier} that perform conditional simulation, we seek
the unique transport whose component maps $T^1, T^2$ minimize the squared Euclidean displacement cost. We call such a transport a \emph{conditional Brenier map}, denoted $T_{\text{CB}}$. 
A theoretical approximation scheme for finding  $T_{\text{CB}}$ was proposed by \cite{carlier2010knothe}: For $t \in (0,1)$, define the positive definite matrix\looseness-1
\begin{align}\label{eq:At}
    A_t \defeq \text{diag}(\bm{1}_{d_1}, \sqrt{t}\bm{1}_{d_2}), 
\end{align}
where $\bm{1}_{d_1} \defeq (1,\ldots,1) \in \R^{d_1}$, with $d = d_1 + d_2$, and consider the weighted Euclidean cost function
\begin{align}\label{eq:ct_cost}
    c_t(x,y) \defeq \tfrac12\|A_t(x-y)\|^2_2\,.
\end{align}
\cite{carlier2010knothe} show\footnote{Their original result shows convergence of $T_t$ to the Knothe-Rosenblatt rearrangement, a map whose Jacobian is \emph{strictly-triangular}, by considering the weighted Euclidean cost $c_t(x,y) = \sum_{k=1}^d \frac{1}{2} t^{k-1} |x_k - y_k|^2$. 
The proof shows that each component of $T_t$ converges to an optimal map between one-dimensional conditional distributions. The same argument applies block-wise to $x_1,x_2$ to yield the map of the form in~\eqref{eq:conditional_samplingT2} whose Jacobian is \emph{block-triangular}.} the following:
\begin{theorem}[Convergence to conditional Brenier maps]\label{thm: cbconv_main}
Let $\rho,\mu\in\cP_2(\Omega)$ with $\rho$ having a density. Let $T_t$ denote the corresponding optimal transport map for the cost $c_t$ satisfying $(T_t)_\sharp \rho = \mu$.  
Then as $t\to 0$,
\begin{align*}
    \Delta(T_t, T_{{\rm{CB}}}) \defeq \| T_t - T_{{\rm{CB}}}\|^2_{L^2(\rho)} \to 0\,.
\end{align*}
\end{theorem}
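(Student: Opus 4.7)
The proof plan is to exploit the block structure of the cost $c_t = \tfrac12\|x_1-y_1\|^2 + \tfrac{t}{2}\|x_2-y_2\|^2$ via a two-stage $\Gamma$-convergence/selection argument: as $t\to 0$, the leading term selects optimality in the $x_1$-marginal problem, and the $O(t)$ correction performs a secondary selection within the $x_2$-fibers that picks out exactly the conditional Brenier coupling.

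Writing $\gamma_t = (\mathrm{id},T_t)_\sharp \rho \in \Pi(\rho,\mu)$ and $J_t(\gamma) = \int c_t\,d\gamma$, Step 1 is compactness. Since $\rho,\mu\in\cP_2$ and the marginals of $\gamma_t$ are fixed, the family $\{\gamma_t\}$ is tight and any weak subsequential limit $\gamma_0$ lies in $\Pi(\rho,\mu)$. Step 2 is the first-order selection: optimality of $\gamma_t$ combined with weak lower semicontinuity of the quadratic cost (using the uniform second-moment bounds from the fixed marginals) shows that any limit $\gamma_0$ minimizes $J_0(\gamma) := \int \tfrac12\|x_1-y_1\|^2\,d\gamma$ over $\Pi(\rho,\mu)$. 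Because $\rho_1$ has a density, Brenier's theorem makes such minimizers unique on the $(x_1,y_1)$-marginal: $(\gamma_0)_{(x_1,y_1)} = (\mathrm{id},T^1)_\sharp\rho_1$ for the Brenier map $T^1$ between $\rho_1$ and $\mu_1$.

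Step 3 is the secondary selection. For any competitor $\gamma$ that also minimizes $J_0$, the inequality $J_t(\gamma_t)\le J_t(\gamma)$ rearranges to
\[
\tfrac12 \!\int \|x_2-y_2\|^2\,d\gamma_t \;\le\; \tfrac12 \!\int \|x_2-y_2\|^2\,d\gamma + \tfrac{1}{t}\bigl(J_0(\gamma) - J_0(\gamma_t)\bigr)\,,
\]
and since $J_0(\gamma_t)\ge \min J_0 = J_0(\gamma)$ the correction term is non-positive. Passing to the $\liminf$ and using weak lsc, $\gamma_0$ further minimizes $\int\|x_2-y_2\|^2\,d\gamma$ among all $J_0$-minimizers. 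Step 4 identifies this doubly selected limit: disintegrate $\gamma_0$ as $\rho_1(dx_1)\,\delta_{T^1(x_1)}(dy_1)\,\tau_{x_1}(dx_2\,dy_2)$; the product structure $\rho = \rho_1\otimes\rho_2$ forces each $\tau_{x_1}$ to have $x_2$-marginal $\rho_2$, while the $y$-marginal constraint $(\gamma_0)_y = \mu$ forces its $y_2$-marginal to be $\mu_{2|1}(\cdot\mid T^1(x_1))$. The fiberwise quadratic minimization then pins down $\tau_{x_1}$ as the Brenier coupling between $\rho_2$ and $\mu_{2|1}(\cdot\mid T^1(x_1))$ for $\rho_1$-a.e.\ $x_1$, whose transport map is by definition $T^2(\,\cdot\,;x_1)$. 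Hence $\gamma_0 = (\mathrm{id},T_{\mathrm{CB}})_\sharp\rho$, and uniqueness of the limit promotes subsequential to full convergence.

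Finally, to upgrade from weak convergence of plans to $L^2(\rho)$ convergence of maps, note that $\int \|T_t\|^2 d\rho = \int \|y\|^2 d\mu$ is constant in $t$, and equal to $\int \|T_{\mathrm{CB}}\|^2 d\rho$. Weak convergence $(\mathrm{id},T_t)_\sharp\rho\to(\mathrm{id},T_{\mathrm{CB}})_\sharp\rho$ combined with equality of $L^2$-norms is a standard setting in which strong $L^2$-convergence of the maps follows (e.g., via the Radon--Riesz property in Hilbert space, after one establishes weak $L^2(\rho)$ convergence by testing the plans against bounded continuous functions and using uniform integrability from the $p=2$ moment control). The main obstacle is Steps 3 and 4: verifying that the secondary variational principle is tight enough to pin down $\gamma_0$ uniquely, which relies crucially on the product structure of $\rho$ and on $\rho_1$-a.e.\ uniqueness of the fiberwise Brenier couplings.
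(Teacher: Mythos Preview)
The paper does not supply its own proof of Theorem~\ref{thm: cbconv_main}; the result is quoted from \citet{carlier2010knothe}, with a footnote noting that their argument for the Knothe--Rosenblatt map carries over block-wise. Your proposal is precisely that argument: tightness of the plans $(\gamma_t)$, first-order selection on the $x_1$-cost via Brenier's theorem for $\rho_1\to\mu_1$, secondary selection on the $x_2$-cost through the $O(t)$ comparison inequality, fiberwise identification of the limit coupling by disintegration, and the upgrade to $L^2(\rho)$ via constancy of $\|T_t\|_{L^2(\rho)}^2=\int\|y\|^2\,d\mu$ together with Radon--Riesz. So you are correctly reconstructing the cited proof rather than proposing an alternative.

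One technical point worth tightening in a full write-up is the part of Step~4 you already flag as delicate: concluding that the $y_2$-marginal of $\tau_{x_1}$ equals $\mu_{2|1}(\cdot\mid T^1(x_1))$ \emph{for $\rho_1$-a.e.\ $x_1$}, rather than merely on average, is not an immediate consequence of $(\gamma_0)_y=\mu$. It goes through once you disintegrate $\gamma_0$ over $y_1$ first (or use the $\rho_1$-a.e.\ injectivity of the Brenier map $T^1$), and this is exactly where the density assumption on $\rho$ is used a second time.
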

\Cref{thm: cbconv_main} states that, in order to approximate conditional Brenier maps, it suffices to learn optimal transport maps pertaining to a rescaled quadratic cost function, namely $c_t$ in~\eqref{eq:ct_cost}. While this result is non-quantitative, it provides an optimal transport framework for conditional simulation.

\subsection*{Main contributions}
We propose an entropic estimator for conditional Brenier maps  based on the works of~\cite{pooladian2021entropic} and~\cite{carlier2010knothe}.
In fact, we propose a general framework that allows one to use \emph{any} estimator of the (optimal) transport map between two measures.

In addition, the risk incurred by any finite-sample estimator we propose, written $\widehat{T}_t$, has the following decomposition
\begin{align}\label{eq:estimator_decomp_intro}
\begin{split}
    \E\Delta(\widehat{T}_t,T_{\text{CB}}) &\lesssim \E\Delta(\widehat{T}_t,T_t) + \Delta(T_t, T_{\text{CB}}).  
\end{split}
\end{align}
We take steps toward quantifying~\eqref{eq:estimator_decomp_intro} 
by providing a full characterization of the two error terms in the Gaussian setting.
For non-Gaussian measures, we numerically evaluate the performance of the conditional entropic Brenier map relative to two baseline approaches, one based on the nearest-neighbor estimator~\citep{manole2021plugin} and another based on neural networks~\citep{amos2022amortizing}. 
 We show that our estimator is more tractable than these other approaches and requires less tuning 
to maintain performance on standard conditional simulation tasks.\looseness-1

\subsubsection*{Notation}
For $\Omega \sse \Rd$, we write $\cP_2(\Omega)$ as the set of probability measures over $\Omega$ with finite second moment.
Throughout, we write $\rho \in \cP_2(\Omega)$ as the source measure which will always have a density with respect to Lebesgue measure over a (compact convex) set $\Omega \sse \R^d$. We denote the target measure by $\mu \in \cP_2(\Omega)$ for the target measure. 
For $\Omega \sse \R^{d_1 + d_2}$, we explicitly write the joint density as $\rho(x_1,x_2)$, and use $\rho_1(x_1) \in \cP(\R^{d_1})$ (resp. $\rho_2(x_2) \in \cP(\R^{d_2})$) to denote the first (resp. second) marginal distribution. 
For a fixed $x \in \R^{d_1}$, we write $\rho_{2|1}(\cdot|x)$ to be the conditional distribution; we use the same convention for $\mu = \mu(y_1,y_2)$. We denote the push-forward constraint for a transport map $T$ satisfying $T(X) \sim \mu$ for $X \sim \rho$ as $T_\sharp \rho = \mu$. For a positive definite matrix $C \in \R^{d_1 + d_2}$, we write $C^{1/2}$ as the symmetric square-root of $C$ and $\mathsf{L}_C$ as the block-lower Cholesky decomposition of $C$: the 2 × 2-block
matrix (with blocks of size $d_1 \times d_1$ and $d_2 \times d_2$) which satisfies $\mathsf{L}_C\mathsf{L}_C^\top = C$.

\section{Background}\label{sec: background}

\subsection{Optimal transport}
We first recall some facts about optimal transport~\citep{Vil08,San15} for (weighted) squared-Euclidean cost functions of the form $c(x,y) = \tfrac12\|A(x-y)\|^2$ with $A \succ 0$.

We use three formulations of optimal transport (OT) between fixed measures $\rho,\mu \in \cP_2(\Omega)$ for such costs. First, the \emph{Monge} formulation is 
\begin{align}\label{eq:monge_c}
    \OT_0(\rho,\mu) = \inf_{T \in \cT(\rho,\mu)} \int \tfrac12\|A(x - T(x))\|^2 \dd \rho(x)  \,,
\end{align}
where $\cT(\rho,\mu)$ is a family of vector-valued \emph{transport maps} satisfying $X \sim \rho$, $T(X) \sim \mu$. For general $\rho$ and $\mu$, it is easy to see that such a minimizer need not exist, even for such nice cost functions. When they exist, we denote these minimizer by $T_0$, called a \emph{Brenier map}~\citep{Bre91}.

Next, the \emph{primal Kantorovich} problem~\citep{Kan42} is 
\begin{align}\label{eq:ot_primal}
     \OT_0(\rho,\mu) = \inf_{\pi \in \Pi(\rho,\mu)} \iint \tfrac12\|A(x-y)\|^2 \dd \pi(x,y)\,,
\end{align}
where $\Pi(\rho,\mu)$ is the set of couplings between $\rho$ and $\mu$, the set of joint measures with left-marginal $\rho$ and right-marginal $\mu$. Since $\rho$ and $\mu$ have finite second moment, \eqref{eq:ot_primal} is well-posed, and a minimizer, called the \emph{optimal plan}, always exists and is denoted by $\pi_0$.\looseness-1

Finally, the \emph{dual Kantorovich} optimization problem is
\begin{align}\label{eq:ot_dual}
     \OT_0(\rho,\mu) = \sup_{(f,g) \in L^1(\rho\otimes\mu)} \int f \dd \rho  + \int g \dd \mu\,,
\end{align}
which is subject to the constraint
\begin{align}\label{eq:constraint}
f(x) + g(y) \leq \tfrac12\|A(x-y)\|^2 \quad \forall (x,y) \in \Omega\times\Omega\,.
\end{align}
Under the same regularity conditions on the measures, a pair of maximizers $(f_0,g_0)$ exists, which we call optimal  \emph{Kantorovich potentials}. 

These three formulations are reconciled in the following theorem, which gives an explicit formula of the optimal transport map as a function of the optimal Kantorovich potential $f_0$, and says that $\pi_0$ is a deterministic function of $T_0$. The result for $A = I$ was proven by \citet{Bre91}, and generalized to a wide family of cost functions by \citet{gangbo1996geometry}. We present a simplified version of the statement here.
\begin{theorem}[Brenier's theorem for rescaled quadratic costs]
Suppose $\rho$ has a density with respect to the Lebesgue measure, and $\mu$ has finite second moment. Then, for costs $c(x,y) = \tfrac12\|x-y\|^2$, there exists a unique optimal transport map $T_0 \in \cT(\rho,\mu)$ that minimizes \eqref{eq:monge_c} given by 
\begin{align}\label{eq:otmap}
    T_0(x) = \nabla\phi_0(x)\,,
\end{align}
where $\phi_0 = \tfrac12\|\cdot\|^2 - f_0$ and $f_0$ is the optimal Kantorovich potential. Moreover, for costs $c(x,y) = \tfrac12\|A(x-y)\|^2$ with positive definite $A$, the optimal transport map between $\rho$ and $\mu$ is given by
\begin{align}\label{eq:otmap_A}
    T_A(x) = A^{-1} \tilde{T}_0(Ax)\,,
\end{align}
where $\tilde{T}_0$ is the Brenier map between the transformed measures $\rho_A \defeq (A\cdot)_\sharp\rho$  and $\mu_A \defeq (A\cdot)_\sharp\mu$.
\end{theorem}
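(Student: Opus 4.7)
The plan is to prove the two parts in sequence, reducing the weighted case to the classical Brenier case via a linear change of variables. For the first part ($A = I$), I would start from the dual problem~\eqref{eq:ot_dual}--\eqref{eq:constraint} and perform the substitution $\phi(x) \defeq \tfrac12\|x\|^2 - f(x)$ and $\psi(y) \defeq \tfrac12\|y\|^2 - g(y)$. Under this substitution the constraint becomes exactly the Fenchel--Young inequality $\phi(x) + \psi(y) \geq \langle x, y\rangle$, and maximizing the dual is equivalent (after replacing $\phi$ and $\psi$ by their convex biconjugates without loss) to choosing a proper convex lower semi-continuous $\phi$ and setting $\psi = \phi^*$. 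From the primal--dual identity $\int f_0 \dd\rho + \int g_0 \dd\mu = \iint c \dd\pi_0$ together with the pointwise constraint, complementary slackness forces the optimal plan $\pi_0$ to be supported on the set where $\phi(x) + \phi^*(y) = \langle x, y\rangle$, i.e., on the graph of the subdifferential $\partial \phi$. Since $\rho$ has a Lebesgue density and the non-differentiability set of a convex function is Lebesgue-null, $\phi$ is differentiable $\rho$-almost everywhere, so $\partial \phi(x) = \{\nabla\phi(x)\}$ there. This forces $\pi_0 = (\id, \nabla\phi)_\sharp \rho$, and hence $T_0 = \nabla\phi_0$ uniquely, recovering~\eqref{eq:otmap}.

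For the second part, I would reduce to the first by a pushforward argument. Let $S_A$ denote the product map $(x,y) \mapsto (Ax, Ay)$, and for $\pi \in \Pi(\rho,\mu)$ define $\tilde\pi \defeq (S_A)_\sharp \pi$; then $\tilde\pi \in \Pi(\rho_A, \mu_A)$, and since $A$ is nonsingular, $S_A$ induces a bijection between the two coupling sets. Moreover, by the definition of pushforward,
\begin{equation*}
    \iint \tfrac12 \|A(x-y)\|^2 \dd \pi(x,y) = \iint \tfrac12 \|\tilde x - \tilde y\|^2 \dd \tilde\pi(\tilde x, \tilde y),
\end{equation*}
so the OT problem with cost $c_A$ between $\rho,\mu$ and the OT problem with cost $c_I$ between $\rho_A, \mu_A$ have equal value and corresponding minimizers. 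Because $\rho$ has a density and $A$ is nonsingular, $\rho_A$ also has a density; hence the first part supplies the unique Brenier map $\tilde T_0$ from $\rho_A$ to $\mu_A$. Inverting the bijection $S_A$, the unique optimal plan for $c_A$ is the pushforward of $\rho$ under $x \mapsto A^{-1}\tilde T_0(Ax)$, which therefore equals the unique OT map $T_A$, yielding~\eqref{eq:otmap_A}.

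The main technical obstacle is securing the $\rho$-a.e. differentiability of $\phi$ in the first part. This requires two nontrivial ingredients: first, that the dual problem admits a maximizer (which can be taken proper, convex, and lower semi-continuous by passing to $c$-concave envelopes); second, the classical convex-analytic fact that a finite convex function is differentiable outside a Lebesgue-null set. Combined with absolute continuity of $\rho$ with respect to Lebesgue measure, this promotes a.e. differentiability of $\phi$ to $\rho$-a.e. differentiability, which is exactly what converts subdifferential-graph support into a deterministic map. Once the first part is in hand, the change-of-variables reduction for general $A \succ 0$ is essentially mechanical.
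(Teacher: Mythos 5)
The paper does not actually prove this theorem: it is stated as background, with a citation to \citet{Bre91} for the $A=I$ case and \citet{gangbo1996geometry} for general costs, so there is no in-paper proof to compare against. Your outline correctly reconstructs the classical Brenier argument for the first part — the substitution $\phi = \tfrac12\|\cdot\|^2 - f$ turns the dual constraint into Fenchel--Young, the biconjugate replacement lets one take $\phi$ convex lsc with $\psi = \phi^*$, complementary slackness confines $\pi_0$ to the graph of $\partial\phi$, and a.e.~differentiability of convex functions plus absolute continuity of $\rho$ collapses the subdifferential to a single-valued gradient, giving both existence and uniqueness of the deterministic plan. The change-of-variables reduction for $A \succ 0$ is also correct: $S_A(x,y) = (Ax, Ay)$ is a cost-preserving bijection between $\Pi(\rho,\mu)$ with cost $\tfrac12\|A(x-y)\|^2$ and $\Pi(\rho_A,\mu_A)$ with the standard quadratic cost, $\rho_A$ inherits a density from $\rho$ because $A$ is nonsingular, and pulling back the unique Brenier plan $(\id, \tilde T_0)_\sharp \rho_A$ through $S_A^{-1}$ produces exactly $(\id, A^{-1}\tilde T_0(A\cdot))_\sharp\rho$, yielding~\eqref{eq:otmap_A}. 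You are also right to flag the two genuine technical loads (existence of dual maximizers under a finite-second-moment hypothesis, and a.e.~differentiability of finite convex functions); these are where a fully rigorous treatment would spend most of its effort, and both are standard but nontrivial. In short, this is the textbook proof, correctly assembled, for a theorem the paper itself only cites.
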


\subsection{Entropic optimal transport}
Entropic regularization was initially introduced to improve computational tractability of the matching problem; 
see \citet{cuturi2013sinkhorn,PeyCut19} for computational insights, and \cite{genevay2019entropy} for a general overview 
in machine learning.

For $\eps > 0$, the primal entropic optimal transport problem amounts to adding the KL divergence as a regularizer to the primal Kantorovich problem:
\begin{align}\label{eq:eot_primal}
\begin{split}
    \OT_\eps(\rho,\mu) &\defeq \inf_{\pi \in \Pi(\rho,\mu)} \iint \tfrac12\|A(x-y)\|^2 \dd \pi(x,y) + \eps \kl{\pi}{\rho\otimes\mu}\,,
\end{split}
\end{align}
where we recall that when $\pi$ has a density with respect to $\rho\otimes\mu$, we have 
\begin{align*} 
    \kl{\pi}{\rho\otimes\mu} = \int \log \left(\frac{\dd \pi}{\dd (\rho\otimes\mu)} \right) \dd \pi\,,
\end{align*}
and otherwise has value ${+\infty}$. Due to the regularizer, \eqref{eq:eot_primal} is a strictly convex problem and admits a unique minimizer, called the \emph{optimal entropic plan}, written $\pi_\eps$.

Analogously, we have a dual formulation, written 
\begin{align}\label{eq:oteps_dual}
\begin{split}
     \OT_\eps(\rho,\mu) = \sup_{(f,g) \in L^1(\rho\otimes \mu)} &\int f \dd \rho + \int g \dd \mu + \eps \\
     &\qquad - \eps \iint e^{(f(x)+g(y)-\tfrac12\|A(x-y)\|^2)/\eps} \dd \rho(x)\dd\mu(y)\,. 
\end{split}
\end{align}
As $\eps \to 0$, the third term in~\eqref{eq:oteps_dual} converges to the hard-constraint in~\eqref{eq:constraint}. The maximizers of~\eqref{eq:oteps_dual} are called \emph{optimal entropic Kantorovich potentials},  written $(f_\eps, g_\eps)$.\looseness-1 

In addition, the entropic optimal transport problem exhibits the following \emph{primal-dual} recovery formula \citep{Csi75}, in which the dual variables give an explicit form for the primal solution via
\begin{align}\label{eq:primal_dual}
    \dd\pi_\eps(x,y) = e^{(f_\eps(x) + g_\eps(y)  - \tfrac12\|A(x-y)\|^2)/\eps}\dd\rho(x)\dd\mu(y).
\end{align}
We can extend the potentials $(f_\eps,g_\eps)$ to lie outside the support of $\rho$ and $\mu$ respectively, by appealing to the marginal constraints (see \cite{mena2019statistical,nutz2021entropic}). Henceforth, we write
\begin{align*}
\begin{split}
& f_\eps(x)\! = \!-\eps\log\int e^{(g_\eps(y) - \tfrac12\|A(x-y)\|^2)/\eps}\dd\mu(y) \quad (x \in \R^d) \\
    & g_\eps(y) \! = \! -\eps\log\int e^{(f_\eps(x) - \tfrac12\|A(x-y)\|^2)/\eps}\dd\rho(x) \quad (y \in \R^d)\,.
\end{split}
\end{align*}

\subsubsection{Entropic analogues to Brenier's theorem}\label{sec:entmap}
As before, consider the special case $c(x,y) = \tfrac12\|x-y\|^2$. We can write $\phi_\eps = \tfrac12\|\cdot\|^2 - f_\eps$ and $\psi_\eps = \tfrac12\|\cdot\|^2 - g_\eps$. Appealing to the expression given by the extended potentials, we have\looseness-1
\begin{align}\label{eq:phi_eps}
    \phi_\eps(x) = \eps \log \int e^{(x^\top y - \psi_\eps(y))/\eps}\dd\mu(y)\,.
\end{align}
Then, the \emph{entropic Brenier map} is given by 
\begin{align}\label{eq:entropic_brenier_map}
    T_\eps(x) \defeq \nabla \phi_\eps(x) = \E_{\pi_\eps}[Y|X=x]\,,
\end{align}
where the second equality follows from taking the gradient of \eqref{eq:phi_eps}.\footnote{This is permitted by an application of the dominated convergence theorem.} This explicit characterization of the entropic map (relating the gradient field to the barycentric projection of $\pi_\eps$) was introduced as a means of providing a computationally tractable estimator to the Brenier map \citep{pooladian2021entropic}. 

For costs of the form $\tfrac12\|A(x-y)\|^2$, we arrive at a similar formula to~\eqref{eq:otmap_A} as in the unregularized case (cf. \citet[Lemma 2]{klein2023learning})
\begin{align}\label{eq:entropic_brenier_rescaled}
    T_{A,\eps}(x) = A^{-1} \tilde{T}_\eps(Ax)\,,
\end{align}
where $\tilde{T}_\eps$ is the entropic Brenier map between the transformed measures $\rho_A \defeq (A\cdot)_\sharp\rho$  and $\mu_A \defeq (A\cdot)_\sharp\mu$.

\subsubsection{Computational considerations with and without regularization}\label{sec:computation}
Given i.i.d. samples $X_1,\ldots,X_n \sim \rho$ and $Y_1,\ldots,Y_n \sim \mu$, the primal Kantorovich problem can be computed by solving the linear program 
\begin{align}
    \widehat{P} \defeq \argmin_{P \in \mathsf{DS}_n} \langle C, P \rangle\,,
\end{align}
where $\mathsf{DS}_n$ is the set of doubly-stochastic $n \times n$ matrices (i.e., matrices with non-negative entries where the row and columns each sum to one), and $C_{ij} \defeq \tfrac12\|A(X_i - Y_j)\|^2$. The runtime for solving this linear program is well-known to be $\cO(n^3\log(n))$, with the caveat that the cost matrix $C \in \R^{n\times n}$ must be stored in memory; see \citet[Chapter 3]{PeyCut19}.\looseness-1

Incorporating entropic regularization has the benefit of improved runtime complexity. Given $n$ samples as before, 
we now solve the strongly convex program
\begin{align}\label{eq:eot_discret}
    \widehat{P}_{\eps} \defeq \argmin_{P \in \mathsf{DS}_n} \langle C, P \rangle + \eps H(P)\,,
\end{align}
where $H(P)$ is the discrete entropy of the matrix $P$. The most well-known algorithm for computing~\eqref{eq:eot_discret} is \emph{Sinkhorn's algorithm} \citep{sinkhorn1964relationship,cuturi2013sinkhorn}. 
Solving for $\widehat{P}_\eps$ entails an iterative approach, where we solve for discrete analogues of the optimal dual variables $(\hat{f}_\eps,\hat{g}_\eps) \in \R^n \times \R^n$. By way of~\eqref{eq:primal_dual}, this gives 
\begin{align}\label{eq:primal_dual_discrete}
    (\widehat{P}_\eps)_{ij} = e^{(\hat{f}_\eps)_i/\eps} e^{C_{ij}/\eps}e^{(\hat{g}_\eps)_j/\eps}\,.
\end{align}
In this setting, is it known that for a given $\eps > 0$, the matrix $\widehat{P}_\eps$ can be computed in $\tilde{\cO}(n^2/(\eps\delta))$ time, where $\delta>0$ is the desired tolerance to optimality \citep{AltWeeRig17}. For $n$ sufficiently large, the quadratic runtime complexity is a significant improvement over the cubic complexity of the unregularized solver. 

While the above approach is popular, one can also solve for the dual optimal vectors $(\hat{f}_\eps,\hat{g}_\eps) \in \R^n \times \R^n$ without storing the cost matrix $C \in \R^{n\times n}$. This results in a computationally feasible method for $n \gtrsim 10^5$. We refer to \citet[Chapter 4]{PeyCut19} for more details.

\section{A family of estimators for\\ conditional simulation}
\label{sec:gen_recipe}
Given $X_1,\ldots,X_n \sim\rho$ and $Y_1,\ldots,Y_n\sim\mu$, our three-step recipe to estimate conditional Brenier maps  $T_{\text{CB}}$ is summarized as follows: For fixed $t > 0$, let $A_t$ be given by \eqref{eq:At}. Then\looseness-15
\begin{enumerate}
    \item scale the data to obtain $\XX_t \defeq (A_tX_1,\ldots,A_tX_n)$ and $\YY_t \defeq (A_tY_1,\ldots,A_tY_n)$,
    \item learn an estimator $\widehat{T}_0$ of the Brenier map from the data $\XX_t$ to $\YY_t$,
    \item for a new sample $x \sim \rho$, unscale the pushforward map, resulting in 
\begin{align}\label{eq:hat_t_t}
    \widehat{T}_t(x) \defeq A_t^{-1} \widehat{T}_0(A_tx)\,.
\end{align}
\end{enumerate}
This general approach allows practitioners to consider any estimator for the optimal transport map $\widehat{T}_0$ on the basis of data. 
We now discuss some estimators $\widehat{T}_t$ in detail, and comment on their scalability and practicality.  
\subsection{Approach 1: Conditional entropic Brenier maps}
Our main estimator is based on the entropic Brenier map introduced by ~\cite{pooladian2021entropic}.
Let $(\hat{f}_{\eps,t},\hat{g}_{\eps,t})$ be the output of Sinkhorn's algorithm on the scaled data $\XX_t$ and $\YY_t$. Following~\eqref{eq:entropic_brenier_map} and \eqref{eq:entropic_brenier_rescaled}, the estimator is 
\begin{align}
    \widehat{T}_{\eps,t}(x) 
    =   \sum_{i=1}^n Y_i \frac{e^{((\hat{g}_{\eps,t})_i - \tfrac12\|A_t(x - Y_i)\|^2)/\eps}}{\sum_{j=1}^n e^{((\hat{g}_{\eps,t})_j - \tfrac12\|A_t(x - Y_j)\|^2)/\eps} }\,.
\end{align}
This non-parametric estimator can be evaluated in $\cO(n)$ time, and estimated in $\cO(n^2/\eps)$ runtime. More importantly, as Sinkhorn's algorithm is GPU-friendly, this estimator is scalable to over $n \gtrsim 10^5$ sample points.
\subsection{Approach 2: Nearest-Neighbor estimator}
Another non-parametric estimator is due to~\cite{manole2021plugin} which was adopted by~\cite{hosseini2023conditional} for the purpose of conditional simulation. Let $\widehat{P}_t$ be the discrete optimal transport matching computed on the scaled data $\XX_t$ and $\YY_t$. For $x \in \Rd$, this estimator is\looseness-1
\begin{align}\label{eq:Tnn}
    \widehat{T}_{\text{NN},t}(x)  
    =  \sum_{i=1}^n \bm{1}_{V_{i}}(A_tx) Y_{\widehat{P}_t(i)}\,,
\end{align}
where $(V_i)_{i=1}^n$ are Voronoi regions
\begin{align*}
    V_i \defeq \{ x \in \Rd \colon \ \|x - A_tX_i\| \leq \| x - A_tX_k\|\, \forall \ k \neq i\}\,.
\end{align*}
Computing the closest $X_i$ to a new sample $x$ has runtime $\cO(n\log(n))$, though the overall runtime of the estimator is still $\cO(n^3)$, since $\widehat{P}_t$ needs to be initially computed (recall the discussions in~\cref{sec:computation}). An important caveat to this estimator is that it is only defined on the \emph{in-sample} target points. 
\subsection{Approach 3: Neural networks}
Neural networks are widely used to estimate optimal transport maps on the basis of data, and fall under our proposed framework as well. We follow the work of~\cite{amos2022amortizing}, which trains two multilayer perceptrons (MLPs) neural networks $\phi_\theta$ and $x_\vartheta$ to define the map. In brief, the network $x_\vartheta :\R^d \to \R^d$ models the reverse transport map and $\phi_\theta :\R^d \to \R$ models the Brenier potential. We can fit these neural networks using stochastic gradient descent on minibatches of our fixed data $\XX_t$ and $\YY_t$, and express our estimator as
\begin{align}
    \widehat{T}_{{\mathrm{MLP}},t}(x) = A_t^{-1}\nabla\widehat\phi_\theta(A_t x)\,.
\end{align}
We provide more details on the training algorithm in ~\cref{app:NNtraining}, though we refer the interested reader to \cite{amos2022amortizing} for precise details.

\section{Theory for conditional simulation:
The Gaussian case}\label{sec:gaussian_theory_results}
Our goal is to estimate the conditional Brenier map $T_{\rm{CB}}$ on the basis of samples. To explicitly quantify the error, we recall that the $L^2(\rho)$ risk incurred by all estimators from the preceding section can be decomposed as
\begin{align}\label{eq:estimator_decomp}
\begin{split}
    \E\Delta(\widehat{T}_t,\TCB) &\lesssim \E\Delta(\widehat{T}_t,T_t) + \Delta(T_t, \TCB) =:  (\mathsf{T1}) + (\mathsf{T2})\,,
\end{split}
\end{align}
where we recall $\Delta(f,g)\defeq \|f-g\|^2_{L^2(\rho)}$. $(\mathsf{T1})$ denotes the statistical error, which depends on the method of choice, whereas $(\mathsf{T2})$ denotes the approximation error of 
$T_t$ to $T_{\rm{CB}}$. Ideally, our choice of $t$ should decrease with the number of samples, resulting in a consistent estimator.

In this section, we make the estimates for~\eqref{eq:estimator_decomp} quantitative 
by considering a Gaussian-to-Gaussian transport problem, where we can leverage closed-form expressions for the estimators as a first step to understanding the error. All proofs in this section are deferred to~\Cref{app:proofs}. We will require the following assumption:
\begin{description}
\item \textbf{(G)} Let $\rho=\cN(0,I_d)$ and $\mu = \cN(m,\Sigma)$ for $\Sigma \succ 0$. 
\end{description}
We first collect three closed-form expressions of interest.
\begin{prop}[Closed-form expressions]\label{prop:closedform}
Suppose $\rho$ and $\mu$ satisfy \textbf{(G)}. Let $T_{\text{B}}$ be the optimal transport map under the squared-Euclidean cost, $T_t$ be the optimal transport map for the $c_t$ cost, and $T_{\text{CB}}$ be the conditional Brenier map, all between $\rho$ and $\mu$. Then,\looseness-1
\begin{align*}
    T_{\text{B}}(x) &= m + \Sigma^{1/2}x\,, \\
    T_{t}(x) &= m + A_t^{-2}(A_t^{2}\Sigma A_t^{2})^{1/2}x\,, \\
    T_{{\rm{CB}}}(x) &= m + \mathsf{L}_\Sigma x\,.
\end{align*}
\end{prop}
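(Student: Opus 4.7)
The plan is to verify each of the three formulas by direct computation, relying only on Brenier's theorem for Gaussians, the rescaling identity~\eqref{eq:otmap_A}, and the block-triangular construction~\eqref{eq:cond_brenier}. Brenier's theorem guarantees uniqueness of the optimal map in each setting, so for each case it suffices to exhibit a candidate affine map that both equals the gradient of a convex quadratic and pushes $\rho$ forward to the correct target measure.

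For $T_B$, I would observe that $x \mapsto m + \Sigma^{1/2} x$ is the gradient of the convex quadratic $\phi(x) \defeq m^\top x + \tfrac12 x^\top \Sigma^{1/2} x$ (convex since $\Sigma \succ 0$), and that $m + \Sigma^{1/2} X \sim \cN(m,\Sigma)$ for $X\sim\rho$. For $T_t$, I would apply~\eqref{eq:otmap_A} with $A = A_t$. Since Gaussians are closed under linear maps, the transformed measures are $(A_t)_\sharp\rho = \cN(0, A_t^2)$ and $(A_t)_\sharp\mu = \cN(A_t m, A_t\Sigma A_t)$. Deriving the Brenier map $\tilde T_0$ between these two Gaussians by the same argument used for $T_B$, up to an additional linear change of variables to standardize the source, yields $\tilde T_0(y) = A_t m + A_t^{-1}(A_t^{2}\Sigma A_t^{2})^{1/2} A_t^{-1} y$. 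Substituting into $T_t(x) = A_t^{-1}\tilde T_0(A_t x)$ gives the stated formula.

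For $T_{{\rm{CB}}}$, I would use the block-triangular construction~\eqref{eq:cond_brenier}. Since $\mu_1 = \cN(m_1, \Sigma_{11})$, the $T_B$ case immediately gives $T^1(x_1) = m_1 + \Sigma_{11}^{1/2} x_1$. For any $y_1$, joint Gaussianity gives the standard conditional formula $\mu_{2|1}(\cdot\mid y_1) = \cN\bigl(m_2 + \Sigma_{21}\Sigma_{11}^{-1}(y_1 - m_1),\, S\bigr)$, where $S \defeq \Sigma_{22} - \Sigma_{21}\Sigma_{11}^{-1}\Sigma_{12}$ is the Schur complement, which is positive definite since $\Sigma \succ 0$. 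Applying the Brenier formula a second time, with source $\cN(0, I_{d_2})$ and target this conditional, yields $T^2(x_2; x_1) = m_2 + \Sigma_{21}\Sigma_{11}^{-1/2} x_1 + S^{1/2} x_2$; stacking the two components recovers $m + \mathsf{L}_\Sigma x$ by the definition of the block-Cholesky factor in the notation section.

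Nothing in this argument is genuinely hard; the only points that will demand care are the algebraic simplification appearing in the $T_t$ formula and matching of the assembled conditional blocks with the definition of $\mathsf{L}_\Sigma$. As a sanity check I would verify explicitly that both $A_t^{-2}(A_t^{2}\Sigma A_t^{2})^{1/2}$ and $\mathsf{L}_\Sigma$ satisfy $B B^\top = \Sigma$, which is precisely the Gaussian pushforward constraint $T_\sharp \rho = \mu$.
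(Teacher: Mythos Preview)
Your proposal is correct and follows essentially the same route as the paper: for $T_t$ you both invoke the rescaling identity~\eqref{eq:otmap_A} and compute the Gaussian Brenier map between the transformed measures $\cN(0,A_t^2)$ and $\cN(A_t m,A_t\Sigma A_t)$, arriving at the same linear expression. The only notable difference is that the paper quotes the general Gelbrich formula $T_{\mathrm{B}}^{\rho\to\mu}(x)=m+A^{-1/2}(A^{1/2}BA^{1/2})^{1/2}A^{-1/2}x$ as a black box, whereas you re-derive it each time via ``gradient of a convex quadratic plus pushforward check''; and for $T_{\mathrm{CB}}$ the paper simply writes down $\mathsf{L}_\Sigma$ and asserts it meets the block-triangular criterion, while you explicitly compute $T^1$ and $T^2$ as Brenier maps to the marginal and conditional Gaussians and then assemble them---this is slightly more thorough but amounts to the same verification.
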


Our main result of this section is the following theorem. The proof follows immediately from balancing the results of  \cref{prop:stats_gaussian_t} (see \cref{sec:conv_t1}) and \cref{prop:quantitative_gaussian_result} (see \cref{sec:conv_t2}), and is therefore omitted.
\begin{theorem}
Suppose $\rho$ and $\mu$ satisfy \textbf{(G)}, and consider the following plug-in estimator
\begin{align}\label{eq:plugin_gaussian}
    \widehat{T}_t(x) = \widehat{m} +  A_t^{-2}(A_t^2\widehat{\Sigma}A_t^2)^{1/2}x\,,
\end{align}
where $\widehat{m}$ and $\widehat{\Sigma}$ are the empirical mean and covariance derived from the i.i.d.\thinspace samples from $\mu$. 
If $t(n) \asymp n^{-1/3}$, and $n$ is sufficiently large, the plug-in estimator $\widehat{T}_t$ achieves the estimation rate
\begin{align}
    \E \|\widehat{T}_t - T_{{\rm{CB}}}\|^2_{L^2(\rho)} \lesssim n^{-2/3}\,.
\end{align}
\end{theorem}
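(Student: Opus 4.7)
The strategy is to apply the risk decomposition in~\eqref{eq:estimator_decomp},
\begin{align*}
\E\Delta(\widehat{T}_t,\TCB) \lesssim (\mathsf{T1}) + (\mathsf{T2}),
\qquad (\mathsf{T1}) \defeq \E\Delta(\widehat{T}_t,T_t),
\qquad (\mathsf{T2}) \defeq \Delta(T_t,\TCB),
\end{align*}
plug in quantitative bounds on the two terms from \cref{prop:stats_gaussian_t,prop:quantitative_gaussian_result}, and optimize over $t$. Throughout I would exploit the identity $\E_\rho \|Mx\|^2 = \frob{M}^2$ valid for $\rho=\mathcal{N}(0,I_d)$, which reduces every $L^2(\rho)$ norm of an affine residual to a Frobenius-norm computation on its linear part.

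For the approximation term, the closed forms from \cref{prop:closedform} make $T_t$ and $\TCB$ affine with common intercept $m$, so
\begin{align*}
(\mathsf{T2}) = \frob{A_t^{-2}(A_t^2\Sigma A_t^2)^{1/2} - \mathsf{L}_\Sigma}^2.
\end{align*}
I would write $\Sigma$ and $A_t^2 \Sigma A_t^2$ in $2\times 2$ block form and either (i) use a Schur-complement identity for the block matrix square root, or (ii) carry out a direct perturbative expansion in $t$ (essentially exact in the $1\times 1$-block case and liftable to general $d_1,d_2$). The goal is to show that $t \mapsto A_t^{-2}(A_t^2 \Sigma A_t^2)^{1/2}$ is differentiable at $t=0$ with value $\mathsf{L}_\Sigma$ and an $O(t)$ correction in every entry, so that $(\mathsf{T2}) \lesssim t^2$.

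For the statistical term, the same Gaussian identity and the independence of $\widehat{m}-m$ from $x$ give
\begin{align*}
(\mathsf{T1}) = \E\|\widehat{m}-m\|^2 + \E \frob{A_t^{-2}\bigl((A_t^2\widehat{\Sigma}A_t^2)^{1/2} - (A_t^2\Sigma A_t^2)^{1/2}\bigr)}^2.
\end{align*}
The mean piece is $O(1/n)$ by standard Gaussian concentration. The covariance piece is where the real work lies: a matrix-square-root perturbation bound contributes a Lipschitz factor of order $1/\sqrt{\lambda_{\min}(A_t^2\Sigma A_t^2)} \asymp 1/t$, while the inner conjugation by $A_t^2$ shrinks the Frobenius norm of $\widehat{\Sigma}-\Sigma$ on the $d_2$-block and the outer $A_t^{-2}$ amplifies along that same block. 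Tracking the cancellation between these scalings, together with $\E\frob{\widehat{\Sigma} - \Sigma}^2 \lesssim 1/n$, should yield $(\mathsf{T1}) \lesssim 1/(nt)$. This is the main obstacle: a naive bound that applies $\|A_t^{-2}\|_{\mathrm{op}}^2 = 1/t^2$ and the square-root Lipschitz constant independently would only give $1/(nt^4)$; extracting the sharper $1/(nt)$ requires exploiting the fact that $A_t^{-2}(A_t^2 \cdot A_t^2)^{1/2}$ remains bounded as $t \to 0$.

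Combining the two bounds yields $\E\Delta(\widehat{T}_t,\TCB) \lesssim \tfrac{1}{nt} + t^2$, which is minimized at $t^3 \asymp 1/n$, i.e., $t \asymp n^{-1/3}$. Both terms are then of order $n^{-2/3}$, delivering the claimed rate.
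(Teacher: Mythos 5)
Your overall structure matches the paper exactly: apply the decomposition $\E\Delta(\widehat T_t,\TCB)\lesssim(\mathsf{T1})+(\mathsf{T2})$, bound each term, and balance at $t\asymp n^{-1/3}$. The $(\mathsf{T2})$ sketch is also essentially the paper's (a perturbative expansion of $t\mapsto A_t^{-2}(A_t^2\Sigma A_t^2)^{1/2}$ around $t=0$, carried out in the paper via a block-matrix lemma), and the final balancing arithmetic is correct.

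The genuine gap is in $(\mathsf{T1})$. You correctly identify that the naive bound is $n^{-1}t^{-4}$ and that the claimed bound is $n^{-1}t^{-1}$, but the argument you offer for closing that gap --- ``tracking the cancellation between these scalings'' and ``exploiting the fact that $A_t^{-2}(A_t^2\cdot A_t^2)^{1/2}$ remains bounded as $t\to 0$'' --- is an aspiration, not a proof. Boundedness of a matrix function does not give you a $t$-uniform Lipschitz constant (compare $x\mapsto\sqrt{x}$ on $[0,1]$, which is bounded but has unbounded derivative at $0$), and the cancellations you need are precisely the hard content of the proposition, not a step that can be waved through. The paper avoids this matrix-perturbation route entirely: it changes variables $y=A_tx$ to write $\widehat T_t-T_t = A_t^{-1}(\widehat T_t'-T_t')$ with $T_t'$, $\widehat T_t'$ being the Brenier maps from $\rho_t=\cN(0,A_t^2)$ to $\mu_t=\cN(0,A_t\Sigma A_t)$ and to $\hat\mu_t$ respectively; it shows (via Caffarelli-type regularity) that $DT_t'$ has eigenvalues in $[\sqrt{\lambda_{\min}(\Sigma)},\sqrt{\lambda_{\max}(\Sigma)}]$ uniformly in $t$; and then applies the $L^2$-stability bound for Brenier maps from \citet[Thm.~6]{manole2021plugin}, $\E\|\widehat T_t'-T_t'\|_{L^2(\rho_t)}^2\le\kappa(\Sigma)\,\E W_2^2(\hat\mu_t,\mu_t)\lesssim n^{-1}$. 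The single factor $t^{-1}$ then comes only from $\|A_t^{-1}\|_{\mathrm{op}}^2$ at the very first step, and the conditioning bound $\kappa(\Sigma)$ is $t$-independent. This is the mechanism that replaces your would-be cancellation; without it (or an equivalent explicit computation) your $(\mathsf{T1})\lesssim n^{-1}t^{-1}$ claim is unsupported.
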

\begin{remark}
    It is worth mentioning that this result is far from tight.
     A plug-in estimator of $\mathsf{L}_\Sigma$ would directly provide parametric rates of estimation. However, this estimation approach is outside the spirit of our work, as it avoids the rescaled quadratic cost entirely.
\end{remark}

\subsection{Convergence of $(\mathsf{T1})$} \label{sec:conv_t1}
Our first step is to understand how the scaling of the cost impacts the overall statistical performance of our estimator. We ultimately expect $t = t(n) \searrow 0$, which will (negatively) impact the convergence rate. Indeed, if $t$ were ultimately fixed and not decaying with $n$, then the rescaling impacts the rate of estimation by at most a universal, albeit large, constant, but we would never be able to converge to the conditional Brenier map.

\begin{figure}[t]
    \centering
    \includegraphics[width=0.55\textwidth]{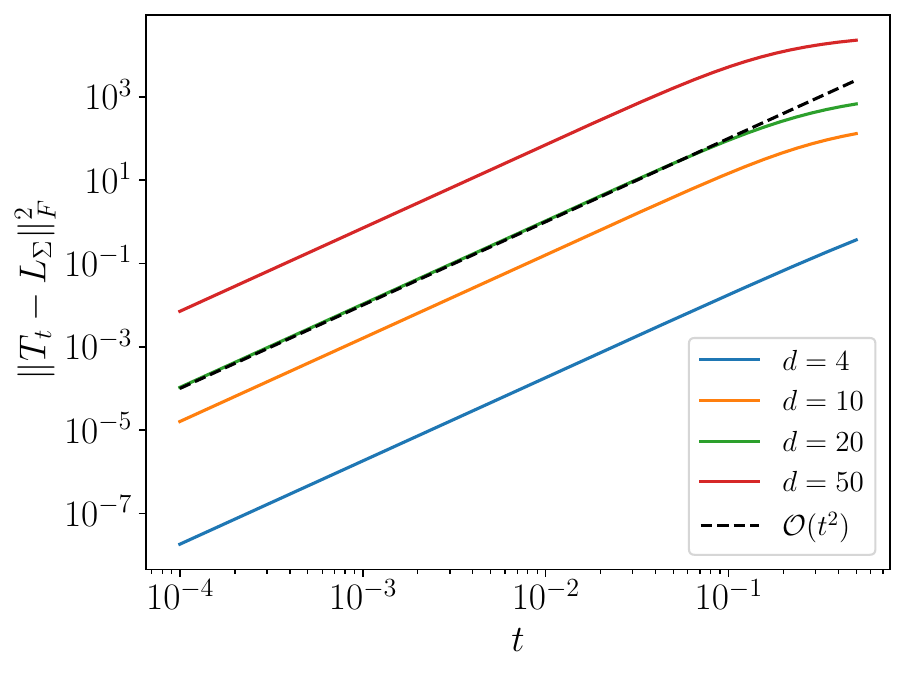}
    \caption{We observe that $(\mathsf{T2})$ asymptotically converges with a rate of $\mathcal{O}(t^2)$ convergence rate with randomly generated covariance matrices of block-type.}\label{fig:convrate_gaussians}
\end{figure}

The following proposition tells us how the scaling of the cost impacts the statistical rates of convergence.
\begin{prop}\label{prop:stats_gaussian_t}
Suppose $\rho$ and $\mu$ satisfy \textbf{(G)}, and consider the plug-in estimator from \eqref{eq:plugin_gaussian}.
The statistical rate of convergence is 
\begin{align}
    (\mathsf{T}1) \defeq \E\|\widehat{T}_t - T_t\|^2_{L^2(\rho)}\lesssim t^{-1}n^{-1}\,.
\end{align}
\end{prop}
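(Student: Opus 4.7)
The plan is to combine the closed-form expressions from \Cref{prop:closedform} with a block-structured perturbation analysis of the matrix square root. Writing $T_t(x) = m + M_t x$ with $M_t \defeq A_t^{-2}(A_t^2\Sigma A_t^2)^{1/2}$, and $\widehat T_t$ analogously with $(\widehat m, \widehat M_t)$, and using that $\rho = \mathcal{N}(0, I_d)$ has mean zero and identity covariance, the $L^2(\rho)$ error splits cleanly as
\begin{align*}
\E\|\widehat T_t - T_t\|_{L^2(\rho)}^2 = \E\|\widehat m - m\|^2 + \E\|\widehat M_t - M_t\|_F^2\,.
\end{align*}
The mean term contributes only $\tr(\Sigma)/n \lesssim n^{-1} \leq t^{-1}n^{-1}$, so all the work lies in the matrix term.

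I would next set $E \defeq \widehat\Sigma - \Sigma$, $B_t \defeq A_t^2 \Sigma A_t^2$, and $H \defeq A_t^2 E A_t^2$, so that $\widehat M_t - M_t = A_t^{-2}Y$ with $Y \defeq (B_t + H)^{1/2} - B_t^{1/2}$. Because $A_t^{-2} = \mathrm{diag}(I_{d_1}, t^{-1}I_{d_2})$ is block-diagonal and $Y$ is symmetric, a direct computation yields the block identity
\begin{align*}
\|\widehat M_t - M_t\|_F^2 = \|Y_{11}\|_F^2 + (1+t^{-2})\|Y_{12}\|_F^2 + t^{-2}\|Y_{22}\|_F^2\,.
\end{align*}
To first order, $Y$ solves the Sylvester equation $B_t^{1/2}Y + Y B_t^{1/2} = H$; second-order remainders are controlled uniformly on the high-probability event $\{\lambda_{\min}(\widehat\Sigma) \geq \tfrac12\lambda_{\min}(\Sigma)\}$, which has probability $1-o_n(1)$ by standard sample-covariance concentration.

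The core of the argument is then to solve this Sylvester equation block-by-block, exploiting two structural facts: (i) $H$ inherits $t$-scaling directly from its definition, namely $H_{11} = E_{11}$, $H_{12} = tE_{12}$, $H_{22} = t^2 E_{22}$; and (ii) expanding $B_t^{1/2}$ around its $t \to 0$ limit reveals $(B_t^{1/2})_{11} = \Sigma_{11}^{1/2} + O(t^2)$, while its off-diagonal and bottom blocks are of size $O(t)$. Chasing the block equations then produces $\|Y_{11}\|_F \lesssim \|E\|_F$ and $\|Y_{12}\|_F, \|Y_{22}\|_F \lesssim \sqrt{t}\,\|E\|_F$, where the $\sqrt{t}$ decay arises from dividing an $O(t)$-sized right-hand side by a well-conditioned $(B_t^{1/2})_{11}$. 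Substituting into the displayed identity,
\begin{align*}
\|\widehat M_t - M_t\|_F^2 \lesssim \|E\|_F^2 + (1+t^{-2})\cdot t\|E\|_F^2 + t^{-2}\cdot t\|E\|_F^2 \lesssim t^{-1}\|E\|_F^2\,,
\end{align*}
and taking expectation with $\E\|E\|_F^2 \lesssim n^{-1}$ closes the argument.

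The hard part will be extracting these block-wise bounds rather than falling back on a naive estimate. A direct Lyapunov-inverse bound $\|Y\|_F \leq \|H\|_F/(2\sqrt{\lambda_{\min}(B_t)}) \lesssim \|E\|_F/t$, combined with $\|A_t^{-2}\|_{\mathrm{op}} = t^{-1}$, gives only $\E\|\widehat M_t - M_t\|_F^2 \lesssim t^{-4}/n$, losing three powers of $t$ relative to the stated rate. Recovering the claimed $t^{-1}/n$ bound therefore depends on simultaneously exploiting the $t$-decay present in the blocks of $H$ and in the off-diagonal blocks of $B_t^{1/2}$, so that these factors cancel three of the four powers of $t^{-1}$ produced by the naive estimate.
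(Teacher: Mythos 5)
Your route is genuinely different from the paper's, and it contains a gap that I do not think can be repaired without a fundamentally different tool.

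The paper's proof never touches the internal block structure of the matrix square root. It first transfers the error to the scaled coordinates, writing $\E\|\widehat T_t - T_t\|_{L^2(\rho)}^2 = \E\|A_t^{-1}(\widehat T_t' - T_t')\|_{L^2(\rho_t)}^2 \le \|A_t^{-1}\|_{\mathrm{op}}^2\, \E\|\widehat T_t' - T_t'\|_{L^2(\rho_t)}^2$, and the single factor $\|A_t^{-1}\|_{\mathrm{op}}^2 = t^{-1}$ is the \emph{only} source of $t$-dependence in the whole bound. It then invokes a map-stability theorem \citep[Theorem 6]{manole2021plugin} applied to the scaled maps $T_t'$ and $\widehat T_t'$, whose Jacobians are uniformly sandwiched between $\sqrt{\lambda_{\min}(\Sigma)}I$ and $\sqrt{\lambda_{\max}(\Sigma)}I$ \emph{with constants independent of $t$} (an anisotropic Caffarelli estimate, \citealt[Theorem 5]{chewi2022entropic}), reducing the problem to $\E W_2^2(\hat\mu_t,\mu_t)\lesssim n^{-1}$. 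That argument is non-asymptotic in both $t$ and $n$.

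Your plan instead tries to extract the $t$-decay directly from the blocks of $Y=(B_t+H)^{1/2}-B_t^{1/2}$ by linearizing the square root to the Sylvester equation $B_t^{1/2}Y+YB_t^{1/2}=H$. The problem is that this linearization is not controlled in the regime where the proposition is actually used. Here $\lambda_{\min}(B_t)\asymp t^2$ while $\|H\|_{\mathrm{op}}$ is of order $\|E\|\asymp n^{-1/2}$ (the block $H_{11}=E_{11}$ carries no factor of $t$), so the perturbation series for the square root, and equivalently the replacement of $Y$ by the Sylvester solution with a remainder that is subordinate, is only valid when $n^{-1/2}\ll t^2$. Under the balance $t\asymp n^{-1/3}$ used to prove the theorem that follows, this requirement \emph{fails} (and gets worse as $n\to\infty$): the quadratic correction $Y^2$, fed back through a Sylvester operator with inverse norm $\asymp t^{-1}$, is not small relative to the linear term. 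The high-probability event $\{\lambda_{\min}(\widehat\Sigma)\ge\tfrac12\lambda_{\min}(\Sigma)\}$ you invoke does not help, because the near-degeneracy of $B_t$ is caused by $A_t$, not by $\widehat\Sigma$. Separately, your claimed block estimates $\|Y_{12}\|_F,\|Y_{22}\|_F\lesssim\sqrt t\,\|E\|_F$ are asserted rather than derived, and your own heuristic (an $O(t)$ right-hand side divided by a well-conditioned $(B_t^{1/2})_{11}$) produces $O(t)$, not $O(\sqrt t)$; carrying the linearized block-chasing through consistently actually gives $\|Y_{12}\|,\|Y_{22}\|=O(t\|E\|)$ and hence $\|\widehat M_t - M_t\|_F^2 \lesssim \|E\|_F^2$, a bound strictly \emph{better} than the proposition, which is another symptom that the linearization is not trustworthy here. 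To make your route rigorous you would need a genuinely nonlinear perturbation bound for the matrix square root that respects the block structure (some Powers--St\o{}rmer or operator-monotonicity argument adapted block-by-block), which is considerably more delicate than the paper's reduction to $W_2^2(\hat\mu_t,\mu_t)$ via a uniform-in-$t$ regularity estimate.
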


\subsection{Convergence of $({\mathsf{T2}})$}\label{sec:conv_t2}
We now quantify the approximation error $(\mathsf{T2})$ in~\eqref{eq:estimator_decomp}. The proof is based on a careful Taylor-expansion argument that is made tractable in the Gaussian-to-Gaussian setting.

\begin{prop}\label{prop:quantitative_gaussian_result}
Suppose $\rho$ and $\mu$ satisfy \textbf{(G)}. Then $T_t$ for $t$ sufficiently small converges to the conditional Brenier map at the rate 
\begin{align}\label{eq:quantitative_gaussian_result}
    (\mathsf{T2}) \defeq \|T_t - T_{{\rm{CB}}}\|^2_{L^2(\rho)} \lesssim_{\Sigma} t^2\,.
\end{align}
\end{prop}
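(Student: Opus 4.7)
The plan is to reduce the $L^2(\rho)$ norm to a Frobenius norm of a matrix, identify that matrix as a very particular (non-symmetric) square root of $\Sigma$, and then extract the $O(t)$ rate via a block-wise perturbation analysis anchored at $t = 0$.

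Step 1 (reduction to linear algebra). Invoking \Cref{prop:closedform}, $T_t(x) - T_{\text{CB}}(x) = (M_t - \mathsf{L}_\Sigma)x$, where $M_t \defeq A_t^{-2}(A_t^2 \Sigma A_t^2)^{1/2}$. Since $\rho = \cN(0,I_d)$, $\E\|M x\|^2 = \|M\|_F^2$ for any matrix $M$, so $(\mathsf{T2}) = \|M_t - \mathsf{L}_\Sigma\|_F^2$ and the claim reduces to showing $\|M_t - \mathsf{L}_\Sigma\|_F \lesssim_\Sigma t$. A direct computation shows $M_t M_t^\top = A_t^{-2}(A_t^2 \Sigma A_t^2)A_t^{-2} = \Sigma$, so $M_t$ is a (generally non-symmetric) square root of $\Sigma$.

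Step 2 (block structure from symmetry). Let $D_t \defeq A_t^2 = \mathrm{diag}(I_{d_1}, t I_{d_2})$. The symmetry of $(A_t^2\Sigma A_t^2)^{1/2} = D_t M_t$ forces $D_t M_t = M_t^\top D_t$. Writing $M_t$ in $(d_1,d_2)$-block form with blocks $S_{ij}(t)$, this identity is equivalent to $S_{11} = S_{11}^\top$, $S_{22} = S_{22}^\top$, and $S_{12} = t\, S_{21}^\top$. Combined with $M_t M_t^\top = \Sigma$, I obtain the closed system
\begin{align*}
    S_{11}^2 + t^2 S_{21}^\top S_{21} &= \Sigma_{11}, \\
    S_{11} S_{21}^\top + t S_{21}^\top S_{22} &= \Sigma_{12}, \\
    S_{21} S_{21}^\top + S_{22}^2 &= \Sigma_{22}.
\end{align*}
At $t = 0$, the unique symmetric-block solution is $(S_{11}, S_{21}, S_{22}) = (\Sigma_{11}^{1/2},\, \Sigma_{21}\Sigma_{11}^{-1/2},\, \Sigma_{22|1}^{1/2})$, which is exactly the block decomposition of $\mathsf{L}_\Sigma$. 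The off-diagonal block $S_{12}(t) = t S_{21}(t)^\top$ is already $O(t)$ for free.

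Step 3 (quantitative rate via IFT). I apply the implicit function theorem to the system in Step 2 viewed as a smooth equation $F(t, S_{11}, S_{21}, S_{22}) = 0$ on (symmetric, full, symmetric) matrix blocks. The linearization at $(0, L_{11}, L_{21}, L_{22})$ amounts to two Lyapunov equations with coefficient matrices $L_{11} = \Sigma_{11}^{1/2} \succ 0$ and $L_{22} = \Sigma_{22|1}^{1/2} \succ 0$ (strict positivity via the Schur complement, using $\Sigma \succ 0$), together with multiplication by the invertible $L_{11}$. Hence the Jacobian is an isomorphism on the appropriate space of symmetric/full blocks, the solution $(S_{11}(t), S_{21}(t), S_{22}(t))$ is $C^1$ in a neighborhood of $t = 0$, and a first-order Taylor expansion gives $S_{ij}(t) = L_{ij} + O(t)$. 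Combined with $S_{12}(t) = O(t)$ this yields $\|M_t - \mathsf{L}_\Sigma\|_F^2 \lesssim_\Sigma t^2$.

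The main obstacle is Step 3: naively the square root $M_t^{1/2} = (A_t^2 \Sigma A_t^2)^{1/2}$ is badly behaved at $t = 0$ (the matrix degenerates to rank $d_1$), so one cannot just Taylor-expand the square root. The trick is that the renormalization by $A_t^{-2}$ lifts this degeneracy, and the block-symmetry constraints reduce the problem to a small smooth system whose Jacobian is controlled by the two Lyapunov operators above. The constant hidden in $\lesssim_\Sigma$ degenerates as $\lambda_{\min}(\Sigma_{11}) \to 0$ or $\lambda_{\min}(\Sigma_{22|1}) \to 0$, which is consistent with the non-quantitative hypothesis $\rho$ having a density in \Cref{thm: cbconv_main}.
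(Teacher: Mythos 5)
Your route is genuinely different from the paper's. The paper constructs an explicit second-order polynomial ansatz $M = M^{(0)} + tM^{(1)} + t^2 M^{(2)}$ in block form (with $M^{(2)}$ chosen to cancel the $t^2$ residual), verifies by direct block multiplication that $\|M^2 - A_t^2\Sigma A_t^2\|_{\mathrm{op}} \lesssim t^3$ and $\|A_t^{-2}M - \mathsf{L}_\Sigma\|_{\mathrm{op}} \lesssim t$, and then transfers the first bound to $(A_t^2\Sigma A_t^2)^{1/2}$ via a Loewner-order square-root perturbation. You instead exploit the identity $M_t M_t^\top = \Sigma$ together with the symmetry of $A_t^2 M_t$ to reduce the problem to a closed polynomial system in the three nontrivial blocks of $M_t$, and then invoke the implicit function theorem at the known $t=0$ solution $\mathsf{L}_\Sigma$. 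Your reduction in Steps 1--2 is correct and, I think, cleaner: it makes manifest why the $(1,2)$-block is automatically $O(t)$ (it is $t S_{21}^\top$) and it isolates exactly the algebraic content. What the paper's approach buys is that it is fully constructive and sidesteps any branch-selection issue; what yours buys is that it does not require guessing the second-order correction and shows full smoothness of $M_t$ in $t$ (giving $O(t^k)$ bounds at any order), at the cost of an IFT step.

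The gap is in Step 3: the IFT gives existence and \emph{local} uniqueness of a $C^1$ branch $(S_{11}(t), S_{21}(t), S_{22}(t))$ passing through $(L_{11}, L_{21}, L_{22})$ at $t=0$, but you have not shown that the blocks of the actual matrix $M_t = A_t^{-2}(A_t^2\Sigma A_t^2)^{1/2}$ lie on \emph{this} branch. For each fixed $t > 0$ the system in Step 2 has many solutions (corresponding to the various symmetric square roots of $A_t^2\Sigma A_t^2$), and IFT local uniqueness only rules out other solutions \emph{near} $\mathsf{L}_\Sigma$; you cannot conclude $M_t$ is near $\mathsf{L}_\Sigma$ without an independent argument, which is close to begging the question. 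Two ways to close this: (i) cite the non-quantitative convergence $T_t \to T_{\mathrm{CB}}$ from \Cref{thm: cbconv_main}, which in the Gaussian case is exactly $M_t \to \mathsf{L}_\Sigma$, so for $t$ small the blocks of $M_t$ fall in the IFT neighborhood and hence coincide with the branch; or (ii) argue intrinsically that the IFT branch corresponds to the positive square root: setting $N(t) \defeq A_t^2 M(t)$ for the IFT branch, one checks $N(t)$ is symmetric with $N(t)^2 = A_t^2\Sigma A_t^2$, and a Schur-complement computation using $S_{11}(t), S_{22}(t) \succ 0$ for small $t$ shows $N(t) \succ 0$; uniqueness of the positive-definite symmetric square root then forces $N(t) = (A_t^2\Sigma A_t^2)^{1/2}$, i.e., $M(t) = M_t$. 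Either fix is short, but one of them is needed; as written the identification is asserted rather than proved.
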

\cref{fig:convrate_gaussians} supports \cref{prop:quantitative_gaussian_result}. Note that as $d$ increases, the constant in~\eqref{eq:quantitative_gaussian_result} increases, though the convergence \emph{rate} appears to always be $\cO(t^2)$ for $t$ sufficiently small. Details of the example are provided in~\cref{app:convrate_gaussians}.

\subsection{Impact of the entropic bias}
Unlike the Nearest-Neighbor or the MLP estimators, the entropic Brenier map comes with an explicit bias as an artifact of the $\eps$-regularization scheme. Here, we want to scale both parameters $\eps,t \to 0$ such that the following quantity converges
\begin{align} \label{eq:approx_entropic}
\begin{split}
    \|T_{\eps,t} - T_{\text{CB}}\|^2_{L^2(\rho)} &\lesssim \|T_{\eps,t} - T_{t}\|^2_{L^2(\rho)} + \|T_{t} - T_{\text{CB}}\|^2_{L^2(\rho)}\,.
\end{split}
\end{align}
Ultimately, we would like to provide a general rule for selecting the entropic parameter $\eps$ as a function of $t$. We expect $t = t(n)$ to decrease as the number of samples $n$ increases, and similarly for $\eps = \eps(n)$. The following result controls the entropic bias (first term in~\eqref{eq:approx_entropic}) for rescaled quadratic costs in the Gaussian setting. 
\begin{prop}\label{prop:entropic_bias}
Let $\rho=\cN(0,A)$ and $\mu=\cN(b,B)$, and let $c_S(x,y) \defeq \tfrac12\|S(x-y)\|^2$ for some positive-definite matrix $S$. Let $T_{\eps,S}$ (resp. $T_S$) be the entropic Brenier map (resp. Brenier map) between $\rho$ and $\mu$. It holds that for $\eps > 0$ 
\begin{align}
    \|T_{\eps,S} - T_S\|^2_{L^2(\rho)} \lesssim \tr(S^{-4}) \eps^2 + \cO_{A,B,S}(\eps^4)\,,
\end{align}

\end{prop}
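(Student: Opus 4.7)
The plan is to combine the rescaling identity~\eqref{eq:entropic_brenier_rescaled} with the classical closed form for entropic OT between Gaussians, and then Taylor-expand in $\eps$.

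\emph{Reduction.} By~\eqref{eq:entropic_brenier_rescaled}, $T_{\eps,S}(x) = S^{-1}\tilde{T}_{\eps}(Sx)$ and $T_S(x) = S^{-1}\tilde{T}_0(Sx)$, where $\tilde{T}_\eps$ (resp.\ $\tilde{T}_0$) is the entropic (resp.\ unregularized) Brenier map under the \emph{standard} squared-Euclidean cost between the pushforwards $\rho' := S_\sharp\rho = \cN(0, SAS)$ and $\mu' := S_\sharp\mu = \cN(Sb, SBS)$ at the same regularization $\eps$ (here I use $S^\top = S$). A change of variables reduces the problem to
\[
\|T_{\eps,S} - T_S\|^2_{L^2(\rho)} = \E_{Y \sim \rho'}\|S^{-1}(\tilde{T}_\eps(Y) - \tilde{T}_0(Y))\|^2.
\]

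\emph{Closed form and expansion.} I would invoke the known closed form for entropic OT between Gaussians (see, e.g., \citet{janati2020entropic}): the optimal plan is itself Gaussian, so $\tilde{T}_\eps$ is affine. Writing $A' := SAS$, $B' := SBS$, and $M := A'^{1/2}B'A'^{1/2}$, both maps send $0$ to the same intercept $Sb$, so their difference is linear, $\tilde{T}_\eps(y) - \tilde{T}_0(y) = D_\eps\,y$, with
\[
D_\eps = A'^{-1/2}\bigl[(M + c_1\eps^2 I)^{1/2} - M^{1/2}\bigr]A'^{-1/2} - c_2\,\eps\,A'^{-1},
\]
for positive constants $c_1, c_2$ fixed by the cost convention. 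Since $c_1\eps^2 I$ commutes with $M$, a scalar Taylor expansion of $\sqrt{\cdot}$ gives $(M + c_1\eps^2 I)^{1/2} = M^{1/2} + \tfrac{c_1}{2}\eps^2 M^{-1/2} + \cO(\eps^4)$, hence $D_\eps = -c_2\,\eps\,A'^{-1} + \cO_{A,B,S}(\eps^2)$.

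\emph{Trace computation.} Since $SX \sim \rho'$ when $X \sim \rho$, cyclicity of the trace yields
\[
\|T_{\eps,S} - T_S\|^2_{L^2(\rho)} = \tr\bigl(S^{-2}\,D_\eps\,A'\,D_\eps^\top\bigr).
\]
Substituting the leading term $-c_2\,\eps\,A'^{-1}$ gives $D_\eps A' D_\eps^\top = c_2^2\eps^2 A'^{-1} + (\text{h.o.t.})$, and the key algebraic simplification is
\[
\tr(S^{-2} A'^{-1}) = \tr\bigl(S^{-2}(SAS)^{-1}\bigr) = \tr(S^{-4}A^{-1}) \;\leq\; \|A^{-1}\|_{\op}\,\tr(S^{-4}),
\]
which yields the claimed $\tr(S^{-4})\eps^2$ leading order (with $\|A^{-1}\|_\op$ absorbed into the implicit constant). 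The cross and higher-order terms from $D_\eps$ contribute the stated $\cO_{A,B,S}(\eps^4)$ remainder upon squaring through the trace.

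\emph{Main obstacle.} The hardest part is the matrix bookkeeping in the trace step: one must track how the factors of $S$ and $A$ interleave through cyclic permutation so that precisely $\tr(S^{-4})$ emerges (rather than a less structured quantity), and isolate the $A,B$-dependence into the implicit constant. The Taylor expansion itself is routine because $\eps^2 I$ commutes with $M$, and the cancellation of the affine intercepts $Sb$ in $\tilde T_\eps - \tilde T_0$ is automatic from the mean-to-mean property of Gaussian-to-Gaussian Brenier maps, entropic or not.
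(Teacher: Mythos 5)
Your approach is essentially the paper's: reduce to the standard quadratic cost via~\eqref{eq:entropic_brenier_rescaled}, invoke the closed-form affine expression for the Gaussian-to-Gaussian entropic Brenier map, Taylor-expand the matrix square root in $\eps$, and push the resulting linear difference through the $L^2(\rho)$ trace. Your trace reduction $\tr(S^{-2}D_\eps A' D_\eps^\top)$ matches the paper's expression, and your final bound $\tr(S^{-4}A^{-1})\leq \|A^{-1}\|_{\op}\tr(S^{-4})$ actually recovers the stated $\tr(S^{-4})\eps^2$ form more directly than the paper's choice $\tr(S^{-4}A^{-1})\leq \|S^{-4}\|_{\op}\tr(A^{-1})$.

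There is one small but real gap. You claim the cross and higher-order terms from $D_\eps$ contribute the stated $\cO_{A,B,S}(\eps^4)$ remainder, but the cross term between the $\eps$-piece $-c_2\eps A'^{-1}$ and the $\eps^2$-piece $\tfrac{c_1}{2}\eps^2 A'^{-1/2}M^{-1/2}A'^{-1/2}$ is of order $\eps^3$, not $\eps^4$. The proof survives because this $\eps^3$ term has a definite sign: after cycling the trace it equals $-c_1 c_2\,\eps^3\,\tr\bigl(S^{-2}A'^{-1/2}M^{-1/2}A'^{-1/2}\bigr)$, the trace of a product of positive-definite symmetric matrices, hence nonnegative, so the whole $\eps^3$ contribution is nonpositive and can be dropped when proving an upper bound. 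This is exactly what the paper does when it says it drops the remaining negative-definite terms. Without noting the sign, your argument as written would give a bound of the form $\eps^2\tr(S^{-4}) + \cO(\eps^3)$ rather than $\eps^2\tr(S^{-4}) + \cO(\eps^4)$.
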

As a special case, we can take $S = A_t$ and, combined with \cref{prop:quantitative_gaussian_result}, we obtain the following result for the risk of the finite-sample entropic estimator. 
\begin{theorem}\label{thm:entropic_gaussian}
Let $T_{\eps,t}$ be the entropic Brenier map under the $c_t$ cost between $\rho$ and $\mu$ satisfying \textbf{(G)}, and let $T_{\text{CB}}$ be the conditional Brenier map. For $\eps \asymp t^{2}$,
\begin{align*}
    \|T_{\eps,t} - T_{{\rm{CB}}}\|^2_{L^2(\rho)} \lesssim_{\Sigma,d} t^2\,. 
\end{align*}
\end{theorem}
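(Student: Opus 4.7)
The plan is to establish \Cref{thm:entropic_gaussian} by combining the triangle-style decomposition already appearing in the excerpt with the two quantitative ingredients the authors have just proved. Concretely, I would first invoke the convexity/triangle inequality
\begin{equation*}
\|T_{\eps,t} - T_{\text{CB}}\|^2_{L^2(\rho)} \leq 2\|T_{\eps,t} - T_t\|^2_{L^2(\rho)} + 2\|T_t - T_{\text{CB}}\|^2_{L^2(\rho)},
\end{equation*}
so that the approximation term is immediately bounded by \Cref{prop:quantitative_gaussian_result}, yielding a contribution of order $t^2$ with a constant depending only on $\Sigma$. What remains is the entropic bias term $\|T_{\eps,t} - T_t\|^2_{L^2(\rho)}$.

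The next step is to apply \Cref{prop:entropic_bias} with the specific choice $S = A_t$, so that $T_{\eps,S} = T_{\eps,t}$ and $T_S = T_t$. The only quantity to compute is the trace $\tr(A_t^{-4})$: since $A_t = \text{diag}(\bm{1}_{d_1},\sqrt{t}\,\bm{1}_{d_2})$, one has $A_t^{-4} = \text{diag}(\bm{1}_{d_1}, t^{-2}\bm{1}_{d_2})$, whence $\tr(A_t^{-4}) = d_1 + d_2\,t^{-2}$. For $t \in (0,1]$ this is $\lesssim_d t^{-2}$, so \Cref{prop:entropic_bias} gives
\begin{equation*}
\|T_{\eps,t} - T_t\|^2_{L^2(\rho)} \;\lesssim_d\; t^{-2}\eps^2 + \mathcal{O}_{\Sigma,A_t}(\eps^4).
\end{equation*}
Choosing $\eps \asymp t^2$ makes the leading contribution $t^{-2}\eps^2 \asymp t^2$, which matches the approximation error already established; combining the two bounds then gives the claimed $\lesssim_{\Sigma,d} t^2$ rate.

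The main obstacle I anticipate is the higher-order remainder $\mathcal{O}_{A,B,S}(\eps^4)$ in \Cref{prop:entropic_bias}, because the implicit constant depends on the matrix $S = A_t$, whose smallest eigenvalue $\sqrt{t}$ degenerates as $t \to 0$. One must therefore return to the derivation of that proposition and track how its constants scale in the singular values of $S$. I would expect the remainder to take the form $c(\Sigma,d)\,t^{-k}\eps^4$ for some finite $k$; with $\eps \asymp t^2$ this becomes $t^{8-k}$, which is $o(t^2)$ as long as $k \leq 6$. Verifying this scaling (or, equivalently, writing the expansion in terms of the transformed measures $\rho_{A_t},\mu_{A_t}$ where $A_t$ no longer appears as a small parameter, and then unscaling via~\eqref{eq:entropic_brenier_rescaled}) is the only delicate calculation. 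Everything else reduces to elementary trace computations and the two cited propositions.
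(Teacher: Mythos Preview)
Your proposal is correct and follows exactly the paper's route: the same triangle-inequality split of \eqref{eq:approx_entropic}, \Cref{prop:quantitative_gaussian_result} for the approximation term, \Cref{prop:entropic_bias} with $S=A_t$ for the entropic bias, the computation $\tr(A_t^{-4})=d_1+d_2t^{-2}\le d\,t^{-2}$, and the balancing choice $\eps\asymp t^2$. Your caution about the $\cO_{A,B,S}(\eps^4)$ remainder is reasonable but unnecessary here: the \emph{proof} (rather than the statement) of \Cref{prop:entropic_bias} actually yields the clean inequality $\|T_{\eps,S}-T_S\|^2_{L^2(\rho)}\le \tfrac{\eps^2}{4}\tr(S^{-4}A^{-1})$ with no higher-order term---indeed $-\tfrac{\eps}{2}I\preceq M_{\eps,S}^{1/2}-M_{0,S}^{1/2}-\tfrac{\eps}{2}I\preceq 0$ by operator monotonicity of the square root, so the squared bracket is $\preceq \tfrac{\eps^2}{4}I$ outright---and the paper simply uses this form, bypassing any $t$-dependent remainder constant.
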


\section{Numerical experiments}\label{sec:experiments_main}
We study our proposed map estimators on several experiments already present in the literature. Our experiments fall into two broad categories: quantitative and qualitative comparisons. In the former, we consider settings where the true conditional Brenier map is known so that sampling the target conditioning or computing the mean-squared error (MSE) from \eqref{eq:estimator_decomp} is possible. For the latter, such a map is not known (which is the case in practice), and so we can only visually compare the generated and target conditional samples. 

Recall that two of our estimators are non-parametric: the entropic Brenier map (EOT) and the Nearest-Neighbor map (NN), while the neural network approach (MLP) is parametric. The entropic Brenier map takes the best aspects of both other estimators: it is scalable to many samples via Sinkhorn's algorithm and only requires tuning one extra variable, $\eps$, as opposed to changing minibatch size, learning schedules, architecture, etc., which is required with neural networks.

In all of our experiments we take the reference measure to be of the form $\rho = \mu_1 \otimes \rho_2$ so that the first component of the conditional Brenier map is $T^1(x_1) = \text{Id}(x_1)$. We can then sample the conditional distribution $\mu_{2|1}$ using $T^2$ alone, i.e., \eqref{eq:conditional_samplingT2} becomes $T_2(X_2;x_1) \sim \mu_{2|1}(\cdot|x_1)$ for $X_2 \sim \rho_2$ and any $\rho_1$-a.e.\thinspace $x_1$.

Our code is publicly available at \href{https://github.com/aistats2025condsim/ConditionalBrenier}{https://github.com/aistats2025condsim/ConditionalBrenier}; details of the experiments not mentioned in the main text are deferred to~\cref{app:experiments}. 

\subsection{Quantitative comparisons}\label{sec:experiments_quant}
\subsubsection{Non-linear conditional Brenier maps} \label{sec:tanh_experiment}

Here we consider a set of two-dimensional joint distributions $\mu(x_1,x_2)$ where $x_1 \sim \mu_1 = U[-3,3]$ and the conditionals $\mu_{2|1}(\cdot|x_1)$ are sampled as follows:
\begin{itemize} \itemsep-3pt
\item[] \texttt{tanhv1}: $X_2 = \tanh(x_1) + \xi \quad \xi \sim \Gamma(1,0.3)$
\item[] \texttt{tanhv2}: $X_2 = \tanh(x_1 + \xi), \quad \xi \sim \mathcal{N}(0,0.05)$
\item[] \texttt{tanhv3}: $X_2 = \xi\tanh(x_1), \quad \xi \sim \Gamma(1,0.3).$
\end{itemize}
For each case, we generate $n = 5000$ independent sets of samples from the source $\rho = \mu_1 \otimes \mathcal{N}(0,1)$ and target distribution $\mu$. For each batch of samples, we compute the three estimators and generate $2000$ samples from the approximate conditionals using the estimated map.

To evaluate the error in the conditional distributions for each value of $x_1$, we 
compute a Monte-Carlo estimate of the following expected error between conditionals
\begin{align} \label{eq:ExpectedConditionalError}
    \E_{X_1 \sim \mu_1}[\mathsf{D}(\mu_{2|1}(\cdot|X_1), \widehat{T}^2(\cdot;X_1)_\sharp\rho_2)]\,,
\end{align}
where $\mathsf{D}$ is some discrepancy over probability measures. 

To populate \cref{tab:tanh}, we used the same $t \in (0,1)$ for all methods to define the rescaled data. For the approach based on entropic OT, we chose $\eps = t/5$. We randomly sample $50$ i.i.d.\thinspace conditional variables from $\mu_1$ and evaluate the error in~\eqref{eq:ExpectedConditionalError} with $\mathsf{D}$ taken to be $W_2$ and the Maximum-Mean-Discrepancy (MMD) metric. We repeat each experiment 10 times and report the standard deviations of the errors. For all cases, we observe that the EOT and NN estimators yield the smallest error between the true and approximate conditionals.  
\begin{table}[!ht]
\centering
\resizebox{0.7\textwidth}{!}{
\begin{tabular}{cccc}
{Dataset} & {Method} & {$W_2$ Error ($10^{-2}$)} & {MMD Error ($10^{-2}$)} \\ \hline
\multirow{3}{*}{\centering \texttt{tanhv1}} & EOT & 4.26 $\pm$ 0.62 & 1.31 $\pm$ 0.51 \\ 
                                        & NN  & 5.05 $\pm$ 0.43 & 1.34 $\pm$ 0.43 \\ 
                                        & MLP & 5.98 $\pm$ 0.88 & 1.97 $\pm$ 0.67 \\ \hline
\multirow{3}{*}{\centering \texttt{tanhv2}} 
& EOT & 3.15 $\pm$ 0.81 & 2.46 $\pm$ 0.86 \\ 
& NN  & 3.83 $\pm$ 0.68 & 1.91 $\pm$ 0.58 \\ 
& MLP & 14.39 $\pm$ 3.86 & 15.50 $\pm$ 5.26 \\ \hline
\multirow{3}{*}{\centering \texttt{tanhv3}} 
& EOT & 2.12 $\pm$ 0.67 & 18.43 $\pm$ 7.78 \\ 
& NN  & 1.77 $\pm$ 0.73 & 10.39 $\pm$ 7.43 \\ 
& MLP & 5.42 $\pm$ 0.61 & 27.19 $\pm$ 2.21 \\ \hline
\end{tabular}
}
\caption{Expected error between conditional distributions for three test problems; we chose $t=6\cdot10^{-2}$, which seemed to provide the best performance for all methods on average.}
\label{tab:tanh}
\end{table}
 
\subsubsection{Gaussian setting} \label{sec:gaussian_experiment}
Next, we examine the MSE of the estimated map on a $d=4$-dimensional Gaussian source and target measure with $d_1 = 2$ and $d_2 = 2$. Figure~\ref{fig:Gaussian_conv} compares the error to the true map with increasing sample size for $t \asymp n^{-1/3}$ and $\eps \asymp t^2$ (following the theoretical analysis in Section~\ref{sec:gaussian_theory_results}). While we do not believe these rates are optimal, this is a first step toward demonstrating consistency of our proposed estimator. 
\begin{figure}[!ht]
\centering
\includegraphics[width=0.6\textwidth]{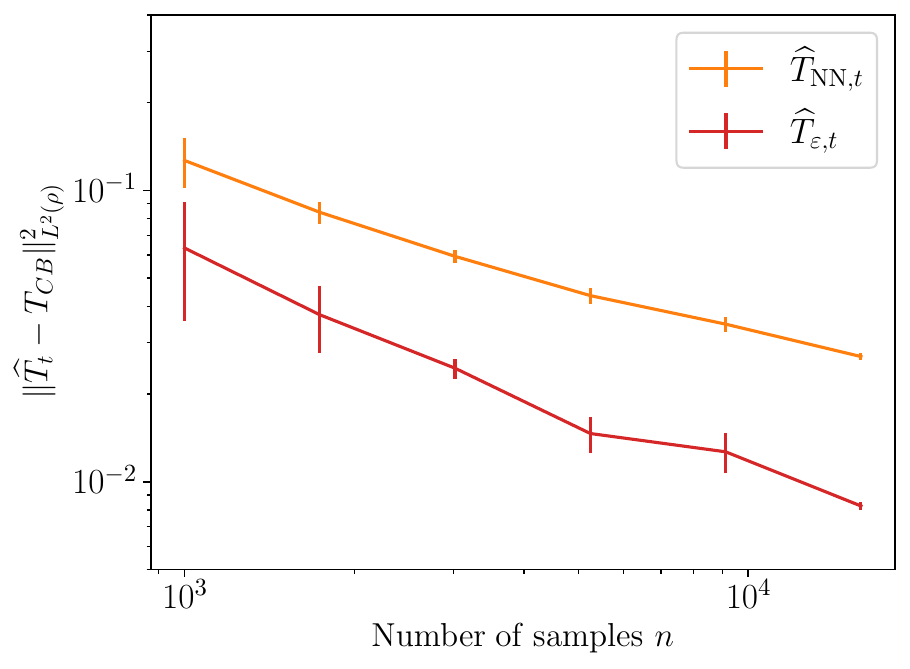}
\caption{MSE of the estimated map for the NN and EOT estimators for a multivariate Gaussian problem with increasing sample size $n$.\label{fig:Gaussian_conv}}
\end{figure}

\subsection{Qualitative comparisons}\label{sec:experiments_qual}

\subsubsection{Two-dimensional distribution} \label{sec:banana_experiment}

Here we visualize the approximated conditional distributions $\mu_{2|1}(\cdot|x_1)$ for different estimators and conditioning variables $x_1$. We consider a two-dimensional distribution $\mu$ where the data is sampled as follows: $X_2 \sim \mathcal{N}(0,1)$ and $X_1 = x_2^2 - 1 + \xi$ with $\xi \sim \mathcal{N}(0,1)$. 

Figure~\ref{fig:banana_distributions} shows the target samples from $\mu$ for learning the maps and the generated samples from the estimated maps $\widehat{T}^2(\cdot,x_1)_\sharp \rho_2$ for $x_1 \in \{-0.5,3\}$. We observe that the maps approximate distributions with both unimodal ($x_1=-0.5$) and bimodal structure ($x_1=3$). The closest is agreement found using the entropic map. 

\begin{figure}[!ht]
\centering
\includegraphics[width=0.27\textwidth]{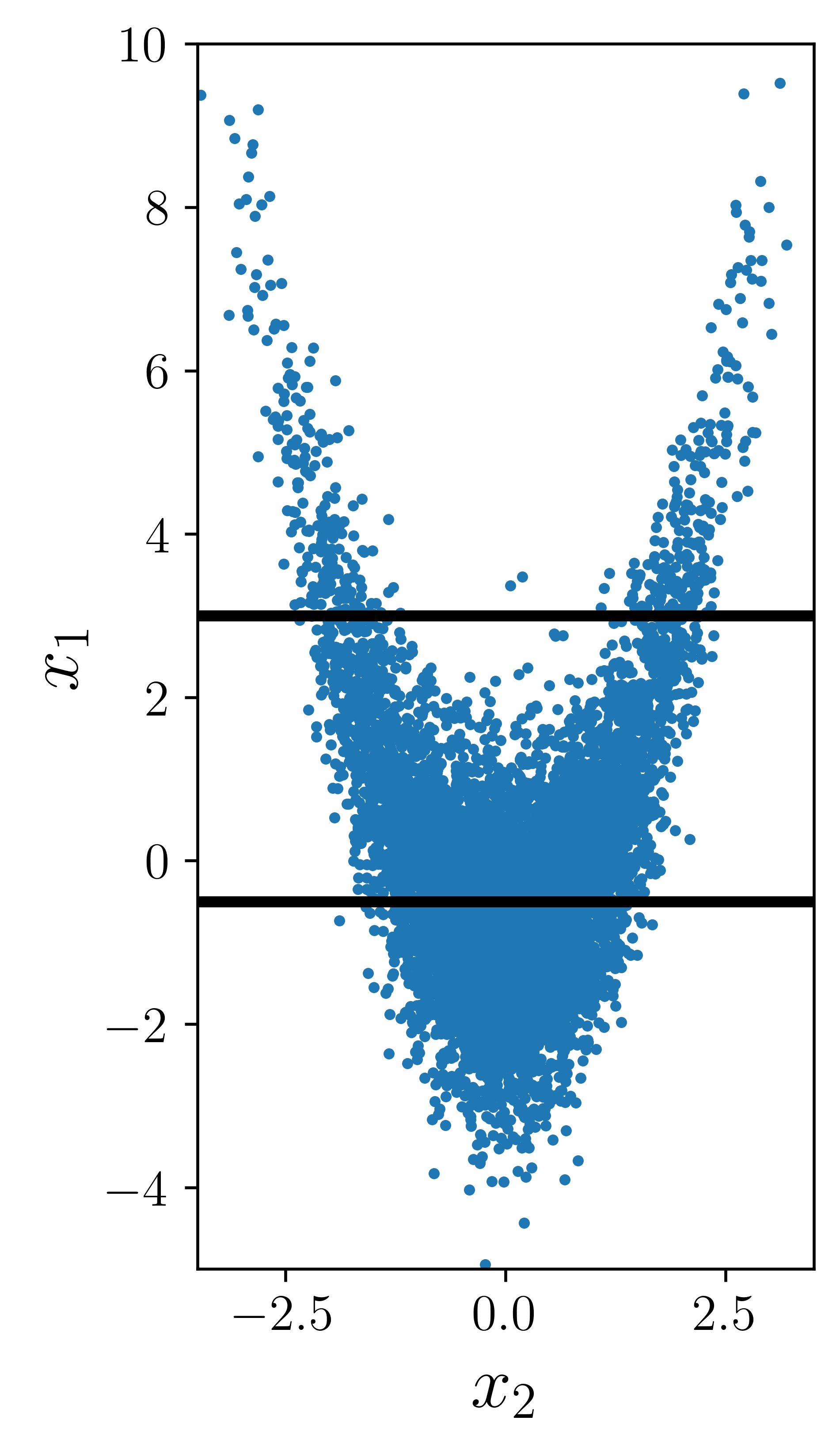}
\includegraphics[width=0.35\textwidth]{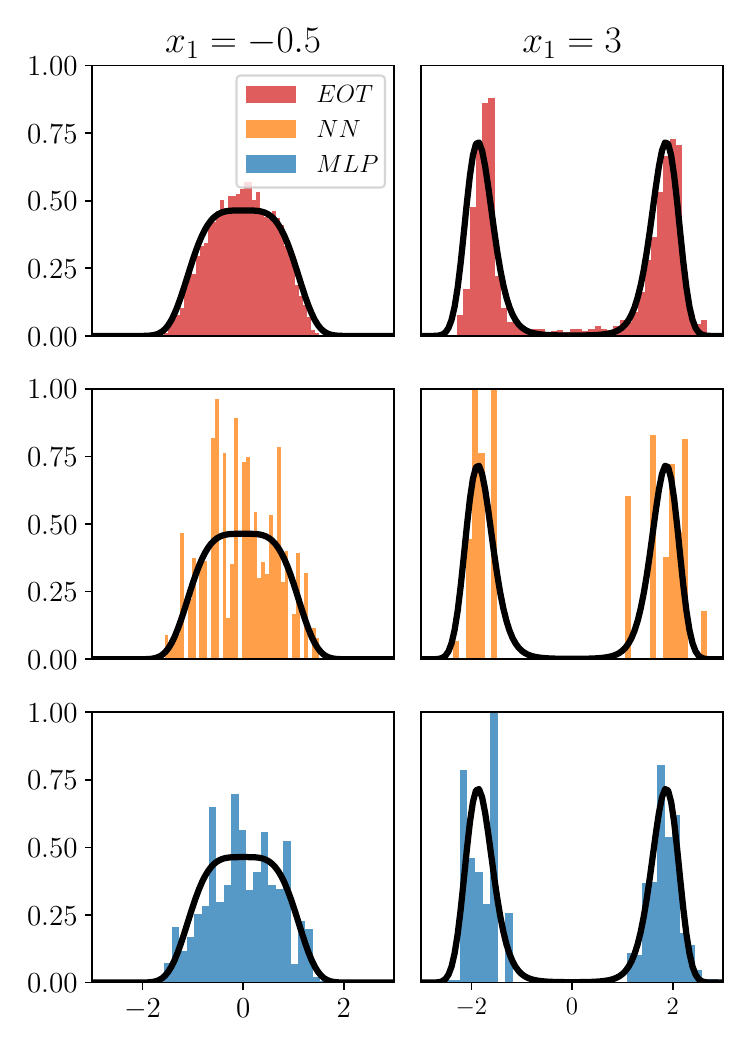}
\caption{Left: Joint samples of $\mu(x_1,x_2)$ with slices at the conditioning values of interest $x_1 \in \{-0.5,3\}$. Right: Generated samples from the EOT, NN, and MLP maps with the true density $\mu_{2|1}(\cdot|x_1)$ in black. \label{fig:banana_distributions}}
\end{figure}

\subsubsection{Posterior of Lotka--Volterra model} \label{sec:Lotka-volterra}
Lastly, we apply the entropic estimator to  sample from the posterior distribution $\mu_{2|1}(\cdot|x_1)$ of a Bayesian inference problem for parameters $X_2 \in \R^4$ in a ordinary differential equation that models population dynamics given one observation $x_1 \in \R^{18}$. Figure~\ref{fig:LVmodel} (left) presents $2.5 \times 10^4$ samples generated using the estimated entropic estimator $\widehat{T}_{\eps,t}$ for $\eps = t = 5\!\times\! 10^{-3}$ and $n = 10^5$. The other estimators (i.e., NN and MLP) are not scalable for this high-dimensional application. We compare the generated samples to samples obtained from an adaptive Metropolis Markov chain Monte Carlo sampler, where we observe close agreement of the concentration and correlations between the conditional distributions. Moreover, the true parameter (red) that generated the observation $x_1$ is covered by the predicted uncertainty.  

\begin{figure}
\centering 
\includegraphics[width=0.48\textwidth]{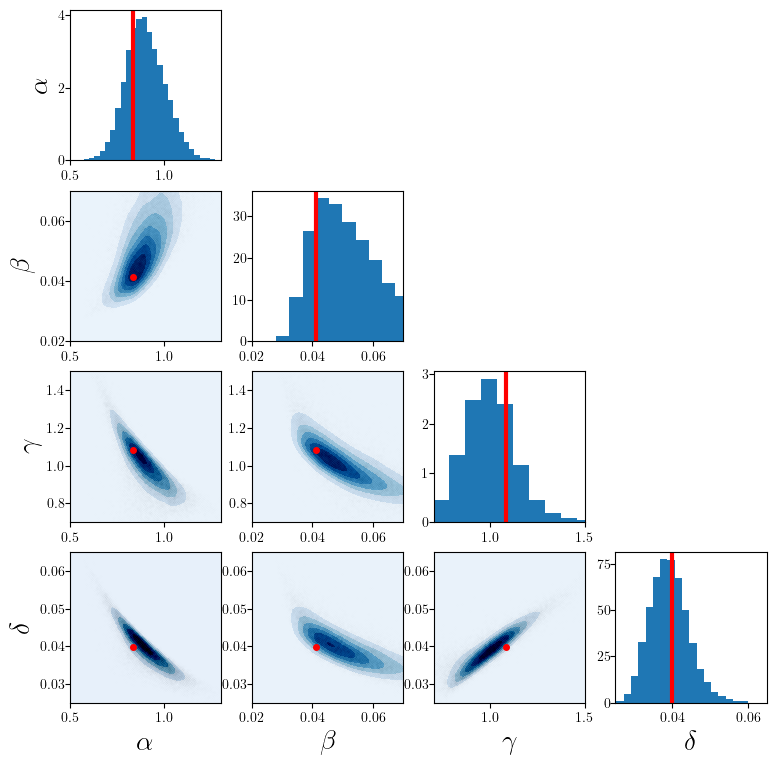}
\includegraphics[width=0.48\textwidth]{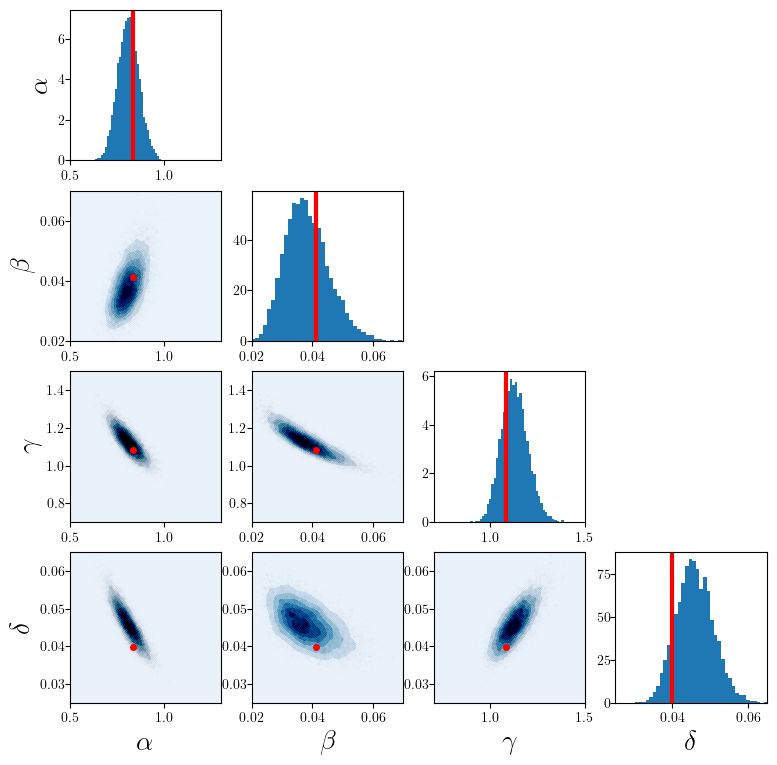}
\caption{Comparison of samples from the posterior distribution using the entropic estimator (left) and an adaptive MCMC sampler (right). \label{fig:LVmodel}}
\end{figure}

\section{Related work} \label{sec:related_work}
\paragraph*{Estimation of (entropic) Brenier maps.}
The statistical estimation of optimal transport maps has been studied by e.g,~\citet{deb2021rates,hutter2021minimax, manole2021plugin,divol2022optimal}. To the best of our knowledge, the estimation of optimal transport maps for other costs has not appeared in the literature so far. The works of~\cite{fan2021scalable,klein2023learning,pooladian2023neural} deviate from the squared-Euclidean cost, considering task-specific costs, such as Lagrangian costs to incorporate barriers in transport, or proximal costs for sparse displacements. Statistical analyses for entropic Brenier maps (again, for the squared-Euclidean cost) have themselves been studied in numerous works; see~\cite{pooladian2021entropic,goldfeld2022limit,goldfeld2022statistical,rigollet2022sample,pooladian2023minimax,stromme2023minimum}.\looseness-1

On the empirical side,  \cite{hosseini2023conditional} and \cite{alfonso2023generative} estimate flows or transport maps based on the notion of rescaled costs by leveraging \cite{carlier2010knothe}. Neither work quantitatively assesses the performance of their estimator as $n\to\infty$ or provide a (heuristic) rule for $t = t(n)\to 0$. In this sense, our work takes 
a first step to making their estimators statistically rigorous.\looseness-1

\paragraph*{Knothe--Rosenblatt map computation.} On the statistical side, our work is related to that of  of~\cite{irons2022triangular} and~\cite{wang2022minimax}. The authors of these works study the statistical estimation of the Knothe--Rosenblatt (KR) rearrangement (the strictly triangular map, rather than the block-triangular map considered here) via maximum likelihood estimation with the goal 
of obtaining rates of convergence in the KL divergence.  
Unlike these works, we are ultimately interested in procedures that explicitly recover the conditional Brenier map. Finally, adapted optimal transport~\cite{eckstein2024computational, backhoff2017causal,  gunasingam2024adapted} incorporates causal constraints to recover a transport plan (in the sense of KR) that is useful for conditional simulation, but developing tractable estimators remains challenging.\looseness-1

\paragraph*{Methods for conditional simulation}
Broadly, computational transport approaches for conditional simulation parameterize the transport map using either conditional normalizing flows~\cite{baptista2023representation, wang2023efficient} or generative adversarial networks~\cite{baptista2020conditional}. Alternatively, the transport can be defined from the flow map of a (possibly stochastic) differential equation as in~\cite{batzolis2021conditional, shi2022conditional, albergo2024stochastic}. Recently, a dynamic formulation of conditional optimal transport was proposed in~\cite{kerrigan2024dynamic}. These dynamic formulation use neural network-based methods, which require choosing many hyper-parameters in practice as compared to the proposed approach.

\section{Conclusion}
We put forth a statistically motivated approach for conditional simulation by estimating conditional Brenier maps. We propose a tractable estimator based on entropic optimal transport that approximates conditional Brenier maps from joint samples. Moreover we provide a near-complete statistical characterization of the estimator in a Gaussian-to-Gaussian setting. A natural and essential direction for future work is to extend the theoretical analyses beyond  Gaussians. Similarly, it would be interesting to extend our proposed estimator to the recent Schr\"odinger bridge estimator developed by \cite{pooladian2024plug} for the purposes of conditional simulation.

\section*{Acknowledgements}
AAP thanks NSF grant DMS-1922658 and Meta AI Research for financial support. JNW is supported by the Sloan Research Fellowship and NSF grant DMS-2339829.

\bibliography{references}

\appendix
\section{Experimental details} \label{app:experiments}

\subsection{Details of neural network estimator} \label{app:NNtraining}
For this estimator, we use a neural network to parameterize the Brenier potential $\phi_{\theta}$ and reverse transport map $x_{\vartheta}$. The neural networks have 4 hidden layers with 128 hidden units in each layer. We use an Adam optimizer to learn the parameters using a 256 batch size at each iteration and 5000 total iterations. The learning rate follows a cosine scheduler starting from the initial value of $10^{-2}$, which decays over $5000$ iterations by the multiplier factor of $10^{-3}$. The neural networks implementation and training is performed using \texttt{OTT-JAX}~\citep{cuturi2022optimal} using their ``Neural Dual" solver tutorial code. 

\subsection{Computing Figure~\ref{fig:convrate_gaussians}}\label{app:convrate_gaussians}

For each total dimension $d \in \{4,10,20,50\}$, we randomly sample a target covariance matrix $\Sigma \in \R^{d \times d}$ using the following procedure. We sample matrices $A \in \R^{d_1 \times d_1}, B \in \R^{d_2 \times d_1}$ and $C \in \R^{d_2 \times d_2}$ with independent standard Gaussian entries. We then construct the block covariance matrix $\Sigma$ of the form
$$\Sigma = \begin{bmatrix} AA^\top & AA^\top B^\top \\  BAA^\top & BAA^\top B^\top + 0.01 I_d \end{bmatrix}. $$
Given the target $\mu = \mathcal{N}(0,\Sigma)$ and standard Gaussian source $\rho = \mathcal{N}(0,I_d)$, we compute the lower block Cholesky factor $L_\Sigma \in \R^{d \times d}$ satisfying $\Sigma = L_\Sigma L_\Sigma^{\top}$ to define the conditional Brenier map $T_{\textrm{CB}}$ in~\cref{prop:closedform}. This map is compared to $T_t(x) = A_t^{-2}(A_t^2\Sigma A_t^2){1/2}x$ in~\cref{prop:closedform} to evaluate the squared errors in Figure~\ref{fig:convrate_gaussians}. Lastly, we note that for a standard Gaussian source we have 
\begin{align*}
    \|T_t - T_{\textrm{CB}}\|_{L^2(\rho)}^2 &= \mathbb{E}_{x \sim \rho} \text{Tr}\left(( A_t^{-2}(A_t^2\Sigma A_t^2)^{1/2} - L_\Sigma)xx^\top(A_t^{-2}(A_t^2\Sigma A_t^2)^{1/2} - L_\Sigma)^\top\right) \\
    &= \| A_t^{-2}(A_t^2\Sigma A_t^2)^{1/2} - L_\Sigma \|_F^2.
\end{align*}

\subsection{Details for Section~\ref{sec:experiments_quant}}\label{app:experiments_quant}

For both the quantitative comparisons, we construct the entropic estimator using the \texttt{Optimal Transport Tools} (\texttt{OTT-JAX}) package~\citep{cuturi2022optimal}. We set the maximum number of iterations for the Sinkhorn solver to $5000$ and use a tolerance of $10^{-3}$ for checking convergence. For the nearest-neighbour estimator, we use the \texttt{Scikit-Learn}~\citep{pedregosa2011scikit} package to compute the nearest neighbor points in the source dataset. 

For the \texttt{tanh} experiments in~\cref{sec:tanh_experiment} we set $t = 6\cdot 10^{-2}$ and $\eps = t/5$ for all examples. For the Gaussian experiment in~\cref{sec:gaussian_experiment}, we set $t = 0.1n^{-1/5}$ and $\eps = t^2$ for all considered sample sizes $n \in [10^{3}, 10^{4}]$. The errors for the estimated maps in Figure~\ref{fig:Gaussian_conv} are computed using Monte Carlo with $10^4$ independent samples from the source distribution.

\subsection{Details for  Section~\ref{sec:experiments_qual}}\label{app:experiments_qual}

For the experiment in~\cref{sec:banana_experiment}, we use $n=7500$ target samples from $\mu$ to build the estimators and set $t =6 \cdot 10^{-2}$. We use $\eps = t/5$ for the entropic estimator. The entropic and nearest-neighbor estimators are computed using \texttt{OTT} and \texttt{Scikit-Learn}, respectively, using the parameters listed in~\cref{app:experiments_quant}. To plot the histograms in Figure~\ref{fig:banana_distributions}, we generate 5000 samples from the estimated conditionals using the second component of the estimated maps $\widehat{T}^2(\cdot,x_1)$ for each $x_1$.  

For the experiment in~\cref{sec:Lotka-volterra}, we consider a target distribution specified by a prior $\mu_2(x_2)$ over parameters $x_2 = (\alpha,\beta,\delta,\gamma) \in \R^4$ that define the right-hand-side nonlinear ODE, and a likelihood function $\mu_{1|2}(x_1|x_2)$ given by the mapping from parameters to noisy observations of the ODE $x_1 \in \R^{18}$. Given the parameters, let $P(t) = (P_1(t),P_2(t)) \in \R_{+}^2$ describe the populations of predator and prey over time in an environment, which evolve according to the coupled ODES 
\begin{align*}
\frac{dP_1}{dt} &= \alpha P_1(t) - \beta P_1(t)P_2(t) \\
\frac{dP_2}{dt} &= - \gamma \alpha P_1(t) + \delta P_1(t)P_2(t),
\end{align*}
with the initial condition $P(0) = (30,1)$. To generate the observations, we simulate the ODEs for $t \in [0,20]$ and sample $(\log (X_1)_{2k-1},\log(X_1)_{2k}) \sim \mathcal{N}(\log P(k \Delta t_{obs}), \sigma^2 I_2)$ with $\Delta t_{obs} = 2$ and $\sigma = 0.01$ for $k = 1,\dots,9$. This sampling process defines the likelihood function $\mu_{1|2}(\cdot|x_2)$. We set the prior distribution to be log-normal, i.e., $\log X_2 \sim \mathcal{N}(m, 0.5I_4)$ with mean $m = (-0.125,-3,-0.125,-3)$. To generate the target dataset $\{(X_1^i,X_2^i)\}_{i=1}^n \sim \mu$, we sample a set of parameters $X_2^i \sim \mu_2(\cdot)$ and paired observations $X_1^i \sim \mu_{1|2}(\cdot|x_2 = X_2^i)$. 

To construct the entropic estimator for the conditional Brenier map, we consider the parameters $t = 0.01$, $\eps = 0.005$, and a tolerance for the Sinkhorn solver of $10^{-4}$. The estimator is computed using the \texttt{PyKeOps} package for scalability of kernel operations on the GPU with large sample sizes $n$~\citep{JMLR:v22:20-275}. For the comparison, we ran a Metropolis Hastings Markov chain Monte Carlo (MCMC) algorithm for one million steps using an initial zero-mean Gaussian proposal distribution. The proposal covariance is adapted at each step based on the Markov chain of the previous samples. We treated the first 500,000 steps of MCMC as burn-in and extracted a thinned set of $25,000$ samples at uniformly spaced iterations. Figure~\ref{fig:LVmodel} compares the MCMC samples to $25,000$ samples generated using the entropic estimator $\widehat{T}^2(\cdot;x_1^*)_\sharp \rho_2$ for an observation $x_1^* \sim \mu(\cdot|x_2 = x_2^*)$ corresponding to the true parameter vector $x_2^* = (0.832, 0.041, 1.082, 0.040)$, which is displayed in red in Figure~\ref{fig:LVmodel}. 

\section{Proofs from Section~\ref{sec:gaussian_theory_results}}\label{app:proofs}
\begin{proof}[Proof of \cref{prop:closedform}]
For general $\rho \defeq \cN(0,A)$ and $\mu \defeq \cN(m,B)$ the OT map has closed form \citep{gelbrich1990formula}:
\begin{align*}
    T_{\mathrm{B}}^{\rho\to\mu}(x) = m + A^{-1/2}(A^{1/2}BA^{1/2})^{1/2}A^{-1/2}x\,.
\end{align*}
Let $\rho_t = (A_t\cdot)_\sharp\rho = \cN(0,A_t^2)$ and similarly $\mu_t = \cN(A_t m,A_t\Sigma A_t)$. Then it holds that
\begin{align*}
    T_{\mathrm{B}}^{\rho_t \to \mu_t} = A_t m + A_t^{-1}(A_t^2\Sigma A_t^2)^{1/2}A_t^{-1}x\,.
\end{align*}
Using now \eqref{eq:otmap_A}, we have
\begin{align*}
    T_{t}(x) &= A_t^{-1}( A_t m + A_t^{-1}(A_t^2\Sigma A_t^2)^{1/2}A_t^{-1})(A_t x) = m + A_t^{-2}(A_t^2\Sigma A_t^2)^{1/2}x\,.
\end{align*}
We now compute $T_{\text{CB}}$. First note that the block lower Cholesky decomposition is
\begin{align*}
    \mathsf{L}_\Sigma = \begin{pmatrix}
        \Sigma_{11}^{1/2} & 0 \\
        \Sigma_{21}\Sigma_{11}^{-1/2}  & W^{1/2}
    \end{pmatrix}\,,
\end{align*}
where we recall
\begin{align*}
    \Sigma = \begin{pmatrix}
        \Sigma_{11} & \Sigma_{21}^\top \\
        \Sigma_{21}  & \Sigma_{22}
    \end{pmatrix}\,,
\end{align*}
and that $W = \Sigma_{22} - \Sigma_{21}\Sigma_{11}^{-1}\Sigma_{21}^\top$\,. Since this is block-lower triangular, this satisfies the criteria to be a conditional Brenier map.
\end{proof}

\begin{proof}[Proof of \cref{prop:stats_gaussian_t}]
    Since the error incurred from estimating the mean $m$ is negligible, we henceforth assume that $\rho_t \defeq \cN(0,A_t^2)$ and $\mu_t \defeq \cN(0,A_t\Sigma A_t)$. The optimal transport map from $\rho_t$ to $\mu_t$ is given by
    \begin{align*}
        T_t'(y) \defeq A_t^{-1}(A_t^2\Sigma A_t^2)^{1/2}A_t^{-1}y\,.
    \end{align*}
    Similarly, we define $\widehat{T}_t'(y) \defeq A_t^{-1}(A_t^2\widehat{\Sigma} A_t^2)^{1/2}A_t^{-1}y$. 
    The main expression we want to bound is then
\begin{align*}
    \E\|\widehat{T}_t - T_t\|^2_{L^2(\rho)} &= \E\|A_t^{-1} ( \widehat{T}_t' - T_t')\|^2_{L^2(\rho_t)} 
    \\
    &\leq \|A_t^{-1}\|_{\text{op}}^2   \E \|\widehat{T}_t' - T_t'\|^2_{L^2(\rho_t)} \\
    &\leq t^{-1}\E \|\widehat{T}_t' - T_t'\|^2_{L^2(\rho_t)}\,.
\end{align*}
To continue, we note that
\begin{align}\label{eq:upper_lower_eigenvals}
    \sqrt{\lambda_{\min}(\Sigma)}I \preceq D T_t'(y) \preceq \sqrt{\lambda_{\max}(\Sigma)}I\,,
\end{align}
where $\lambda_{\max}(\Sigma) > 0$ (resp. $\lambda_{\min}(\Sigma) > 0$) is the largest (resp. smallest) eigenvalue of the covariance matrix $\Sigma$. \Cref{eq:upper_lower_eigenvals} follows from Theorem 5 of \citet{chewi2022entropic}, where we compute $\nabla^2(-\log\rho_t) = A_t^{-2}$ and $(\lambda_{\max}(\Sigma))^{-1}A_t^{-2} \preceq \nabla^2(-\log\mu_t) \preceq (\lambda_{\min}(\Sigma))^{-1}A_t^{-2}$. By a direct application of a smoothness result by \citet[][Theorem 6]{manole2021plugin}, we have
\begin{align*}
    \E\|\widehat{T}_t' - T_t'\|^2_{L^2(\rho_t)} 
    \leq \kappa(\Sigma)\E W_2^2(\hat{\mu}_t,\mu_t)\,,
\end{align*}
where $\kappa(\Sigma) = \lambda_{\max}(\Sigma)/\lambda_{\min}(\Sigma)$, and $\hat{\mu}_t \defeq \cN(0,A_t\widehat{\Sigma}A_t)$. Note that the remaining Wasserstein distance term can be upper bounded (since $\|A_t\|_{\text{op}}\leq 1$) as
\begin{align*}
    \E W_2^2(\hat{\mu}_t,\mu_t) &\leq \E \int \|A_t(\widehat{\Sigma}^{1/2} - \Sigma^{1/2})(x)\|^2 \dd \rho(x) \leq \E \|\widehat{T} - T\|^2_{L^2(\rho)} \lesssim_{\Sigma} n^{-1}\,,
\end{align*}
where $T(x) \defeq \Sigma^{1/2}x$ and similarly $\widehat{T}(x) \defeq \widehat{\Sigma}^{1/2}x$, and the final inequality is well-known; see \cite{flamary2019concentration,divol2022optimal}.

Putting everything together, we have
\begin{align*}
    \E\|\widehat{T}_t - T_t\|^2_{L^2(\rho)} \lesssim_{\Sigma} t^{-1}n^{-1}\,.
\end{align*}
\end{proof}

\begin{proof}[Proof of \cref{prop:quantitative_gaussian_result}]
We write $Q_t \defeq A_t^2$. Suppose that for $t$ small enough, $M_t$ is a positive-definite matrix such that both 
\begin{align}\label{eq:main_gaussian_proof}
    \| M_t^2 - (Q_t\Sigma Q_t)\|_{\text{op}} \lesssim t^3\,, \quad \text{and} \quad \|Q_t^{-1}M_t - \mathsf{L}_\Sigma\|_{\text{op}}\lesssim t\,.
\end{align}
The first inequality implies that
\begin{align*}
    M_t^2 \preceq (Q_t \Sigma Q_t) + c t^3 I\,.
\end{align*}
Since square-roots preserves the positive-definite order, an appropriate Taylor expansion 
gives
\begin{align*}
    M_t \preceq (Q_t \Sigma Q_t)^{1/2} + \frac{c}{2}t^3 (Q_t \Sigma Q_t)^{-1/2} \preceq \frac{c}{2}t^2\Sigma^{-1/2}\,.
\end{align*}
Pre-multiplying by $Q_t^{-1} \preceq t^{-1}I$, we have that
\begin{align*}
    Q_t^{-1}M_t - Q_t^{-1}(Q_t\Sigma Q_t)^{1/2} \preceq \frac{c}{2}t \Sigma^{-1/2}\,,
\end{align*}
which implies $\| Q_t^{-1}M_t - Q_t^{-1}(Q_t\Sigma Q_t)^{1/2} \|_{\text{op}} \leq \frac{c}{2}t (\lambda_{\min}(\Sigma))^{-1/2}$. Combined with the second inequality in \eqref{eq:main_gaussian_proof} via triangle inequality, we arrive at
\begin{align*}
    \| Q_t^{-1}(Q_t\Sigma Q_t)^{1/2}  - \mathsf{L}_\Sigma\|_{\text{op}}\lesssim_{\Sigma} t\,.
\end{align*}
Since $\|\cdot\|_{\text{F}}^2 \leq d \|\cdot\|_{\text{op}}^2$\,, the proof is complete if we can find such an $M_t$. The matrix outlined in~\cref{lem:sqrt_perturbation} satisfies the inequalities in \eqref{eq:main_gaussian_proof}, which is the one we take to complete the proof.
\end{proof}

\begin{proof}[Proof of \cref{prop:entropic_bias}]
For the linear map $x\mapsto Sx$, write $\rho_S \defeq (S\cdot)_\sharp\rho$ and $\mu_S \defeq (S\cdot)_\sharp\mu$. Specifically, $\rho_S = \cN(0,SAS)$ and $\mu_S = \cN(b,SBS)$. The closed-form expression for $\tilde{T}_{\eps,S}$, the entropic map from $\rho_S$ to $\mu_S$ for $\eps \geq 0$ can be computed as
\begin{align*}
    \tilde{T}_{\eps,S}(y) &= ((SAS)^{-1/2}M_{\eps,S}^{1/2}(SAS)^{-1/2} - \tfrac{\eps}{2}(SAS)^{-1})y + Sb\,, \\
    &= ((SAS)^{-1/2}[M_{\eps,S}^{1/2} - \tfrac{\eps}{2}I](SAS)^{-1/2})y + Sb
\end{align*}
where $M_{\eps,S} = (SAS)^{1/2}SBS(SAS)^{1/2} + \tfrac{\eps^2}{4}I$; see~\citet[Proposition 3]{pooladian2022debiaser}. The expression for the Brenier map between $\rho_S$ and $\mu_S$, $\tilde{T}_{0,S}(y)$, follows the same formula. Using \eqref{eq:otmap_A}, the maps between $\rho$ and $\mu$ under the cost $\tfrac12\|S(x-y)\|^2_2$ are given by\looseness-1
\begin{align*}
    & T_{\eps,S}(x) = S^{-1}(SAS)^{-1/2}[M_{\eps,S}^{1/2}  - \tfrac{\eps}{2}I](SAS)^{-1/2}Sx + b 
\end{align*}
We can now directly compute
\begin{align*}
    \|T_{\eps,S} - T_{0,S}\|^2_{L^2(\rho)} &= \tr\bigl[\bigl( S^{-1}(SAS)^{1/2}[M_{\eps,S}^{1/2} - M_{0,S}^{1/2} - \tfrac{\eps}{2}I](SAS)^{-1/2}S\bigr)^2A\bigr] \\
    &= \tr\bigl[\bigl((SAS)^{-2}[M_{\eps,S}^{1/2} - M_{0,S}^{1/2} - \tfrac{\eps}{2}I]^2\bigr)A\bigr] \\
    &= \tr\bigl[S^{-2}A^{-1}S^{-2}[M_{\eps,S}^{1/2} - M_{0,S}^{1/2} - \tfrac{\eps}{2}I]^2\bigr]\,,
\end{align*}
where we use the fact that all matrices are symmetric, and thus commute under trace. A direct application of \citet[Lemma 2]{pooladian2022debiaser} tells us that
\begin{align*}
    M_{\eps,S}^{1/2} = M_{0,S}^{1/2} + \frac{\eps^2}{8}M_{0,S}^{-1/2} + O(\eps^4)\,.
\end{align*}
Expanding the square inside the trace and applying the above Taylor expansion, we see that many terms cancel, and, dropping the remaining negative-definite terms, one arrives at 
\begin{align*}
    \|T_{\eps,S} - T_{0,S}\|^2_{L^2(\rho)} \leq \frac{\eps^2}{4}\tr(S^{-4}A^{-1}) \leq \frac{\eps^2}{4}\|S^{-4}\|_{\text{op}}I_0(\rho)\,,
\end{align*}
where recall $I_0(\rho) = \tr(A^{-1})$ is the Fisher information of $\rho = \cN(0,A)$.
\end{proof}

\begin{proof}[Proof of \cref{thm:entropic_gaussian}]
Writing the expansion \eqref{eq:approx_entropic} and using both \cref{eq:quantitative_gaussian_result} and \cref{prop:entropic_bias}, we obtain (using that $I_0(\rho) = d$)
\begin{align*}
    \|T_{\eps,t} - T_{\text{CB}}\|^2_{L^2(\rho)} &\leq \|T_{\eps,t} - T_{t}\|^2_{L^2(\rho)} + \|T_{t} - T_{\text{CB}}\|^2_{L^2(\rho)} \\
    &\lesssim_{\Sigma} 2 \frac{\eps^2}{4}\tr(A_t^{-4} I) + 2t^2 \\
    &\leq \frac{\eps^2}{2}dt^{-2} + 2t^2\\
    &\lesssim_{\Sigma,d} t^2\,,
\end{align*}
where we choose $\eps^* \asymp_{\Sigma,d} t^{2}$ in the penultimate inequality to conclude. 
\end{proof}

\begin{lemma}[Square-root perturbation]\label{lem:sqrt_perturbation}
Consider the assumptions in \cref{prop:quantitative_gaussian_result}. There exists a matrix $M$ such that the following inequalities hold
\begin{align}\label{eq:sqrt_perturbation}
    \| M^2 - (Q_t\Sigma Q_t)\|_{\text{op}} \lesssim &t^3\,\\
    \|Q_t^{-1}M - \mathsf{L}_\Sigma\|_{\text{op}}\lesssim &t\,,
\end{align}
with $Q_t = A_t^2$.
\end{lemma}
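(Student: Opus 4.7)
The plan is to construct $M$ explicitly via a symmetric block-matrix perturbative ansatz, designed so that $M^2$ matches $Q_t\Sigma Q_t$ up to order $t^3$ while $Q_t^{-1}M$ lies within $O(t)$ of $\mathsf{L}_\Sigma$. Writing $\Sigma = \begin{pmatrix} \Sigma_{11} & \Sigma_{21}^\top \\ \Sigma_{21} & \Sigma_{22} \end{pmatrix}$ and setting $K_2 \defeq \Sigma_{11}^{-1/2}\Sigma_{21}^\top$ and $W \defeq \Sigma_{22} - \Sigma_{21}\Sigma_{11}^{-1}\Sigma_{21}^\top$, the Cholesky factor is $\mathsf{L}_\Sigma = \begin{pmatrix} \Sigma_{11}^{1/2} & 0 \\ K_2^\top & W^{1/2} \end{pmatrix}$. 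The natural leading-order candidate $M_0 = \begin{pmatrix} \Sigma_{11}^{1/2} & tK_2 \\ tK_2^\top & tW^{1/2} \end{pmatrix}$ is a symmetrization of $Q_t\mathsf{L}_\Sigma$; it already makes $Q_t^{-1}M_0 - \mathsf{L}_\Sigma$ of order $t$, but leaves an $O(t^2)$ error in $M_0^2 - Q_t\Sigma Q_t$, which I absorb using $t^2$-order corrections.

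Concretely, I propose the ansatz
\[
M = \begin{pmatrix} \Sigma_{11}^{1/2} + t^2 K_1 & tK_2 + t^2 K_2' \\ tK_2^\top + t^2 K_2'^\top & tW^{1/2} + t^2 K_3' \end{pmatrix},
\]
and determine $K_1, K_2', K_3'$ by cancelling, block by block, the largest-order residuals of $M^2 - Q_t\Sigma Q_t$. The Cholesky identities $(\Sigma_{11}^{1/2})^2 = \Sigma_{11}$, $\Sigma_{11}^{1/2}K_2 = \Sigma_{21}^\top$, and $K_2^\top K_2 + W = \Sigma_{22}$ handle the leading-order terms. The subleading conditions become the Lyapunov equation $\Sigma_{11}^{1/2}K_1 + K_1\Sigma_{11}^{1/2} = -K_2 K_2^\top$, the direct equation $\Sigma_{11}^{1/2}K_2' = -K_2 W^{1/2}$, and the Lyapunov equation $W^{1/2}K_3' + K_3' W^{1/2} = -(K_2^\top K_2' + K_2'^\top K_2)$. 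Each admits a unique (symmetric, where relevant) solution since $\Sigma_{11}^{1/2}, W^{1/2} \succ 0$, with norms controlled in terms of $\|\Sigma_{11}^{-1/2}\|$, $\|\Sigma_{21}\|$, and $\|W^{1/2}\|$.

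Given this $M$, the first inequality is a direct consequence of the construction: after the designed cancellations, the residual in each block consists only of terms of the form $t^3 K_? K_?$ and higher (e.g., $t^3(K_2 K_2'^\top + K_2' K_2^\top)$ in the top-left), so $\op{M^2 - Q_t\Sigma Q_t} \lesssim_\Sigma t^3$. The second inequality follows from
\[
Q_t^{-1}M - \mathsf{L}_\Sigma = \begin{pmatrix} t^2 K_1 & tK_2 + t^2 K_2' \\ tK_2'^\top & tK_3' \end{pmatrix},
\]
where the $\mathsf{L}_\Sigma$-matching entries cancel exactly, so every entry is $O(t)$ and the operator norm is $\lesssim_\Sigma t$. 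Positive-definiteness of $M$ for $t$ small follows by treating $M$ as a perturbation of the PD block-diagonal matrix $\mathrm{diag}(\Sigma_{11}^{1/2}, tW^{1/2})$.

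The main obstacle is the bookkeeping: one expands $M^2$ block by block and tracks how each product contributes at each order in $t$. No single step is delicate—everything reduces to Lyapunov equations with PD coefficients and explicit block computations—but the careful identification of cancellations is what makes the $t^3$ rate come out correctly.
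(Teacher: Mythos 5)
Your construction is essentially the paper's: the same symmetric block ansatz built from $\mathrm{diag}(\Sigma_{11}^{1/2},0)$, a $t$-order term carrying the cross blocks and $W^{1/2}$, and a $t^2$-order correction, with the same Lyapunov equation for $K_1$ (the paper's $B_{11}$) and the same linear relation for $K_2'$ (the paper's $B_{21}^\top$). The only differences are minor: your extra $t^2 K_3'$ in the $(2,2)$ block is unnecessary (with $K_3'=0$ that block's residual is already $O(t^3)$ since $K_2^\top K_2 + W = \Sigma_{22}$ matches the leading $t^2\Sigma_{22}$ term) though harmless, and the positive-definiteness of $M$ for small $t$ (needed when the lemma is invoked in \cref{prop:quantitative_gaussian_result}) is more cleanly seen by checking $A_t^{-1}MA_t^{-1}$, which is an $O(\sqrt{t})$ perturbation of $\mathrm{diag}(\Sigma_{11}^{1/2},W^{1/2})$, rather than viewing $M$ as a perturbation of $\mathrm{diag}(\Sigma_{11}^{1/2},tW^{1/2})$, whose smallest eigenvalue itself vanishes with $t$.
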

\begin{proof}
First, recall that 
\begin{align*}
    \Sigma = \begin{pmatrix}
        \Sigma_{11} & \Sigma_{21}^\top \\
        \Sigma_{21}  & \Sigma_{22}
    \end{pmatrix}\,, \quad \mathsf{L}_\Sigma = \begin{pmatrix}
        \Sigma_{11}^{1/2} & 0 \\
        \Sigma_{21}\Sigma_{11}^{-1/2}  & S^{1/2}
    \end{pmatrix}\,,
\end{align*}
where $S = \Sigma_{22} - \Sigma_{21}\Sigma_{11}^{-1}\Sigma_{21}^\top$\,.

Let
\begin{align}
M\defeq \begin{pmatrix}
        \Sigma_{11}^{1/2} & 0 \\
        0  & 0
\end{pmatrix} + 
    t \begin{pmatrix}
        0 & \Sigma_{11}^{-1/2}\Sigma_{21}^\top\\
        \Sigma_{21}\Sigma_{11}^{-1/2}  & S^{1/2}
    \end{pmatrix} + 
    t^2 \begin{pmatrix}
        B_{11} & B_{21}^\top \\
        B_{21}  & 0
    \end{pmatrix}\,. 
    \end{align}
We compute
\begin{align*}
    M^2 
    &= (Q_t \Sigma Q_t) + t^2 \begin{pmatrix}
        \Sigma_{11}^{1/2}B_{11} + B_{11}\Sigma_{11}^{1/2} + \Sigma_{11}^{-1/2}\Sigma_{21}^\top\Sigma_{21}\Sigma_{11}^{-1/2} & \Sigma_{11}^{1/2}B_{21}^\top + \Sigma_{11}^{-1/2}\Sigma_{21}^\top\Sigma_{11}^{-1/2} \\
        B_{21}\Sigma_{11}^{1/2} + \Sigma_{11}^{-1/2}\Sigma_{21}\Sigma_{11}^{-1/2}  & 0
    \end{pmatrix}
    \\
    &\qquad + O(t^3) \,,
\end{align*}
where 
\begin{align*}
    (Q_t \Sigma Q_t) = 
    \begin{pmatrix}
        \Sigma_{11} & t\Sigma_{21}^\top \\
        t\Sigma_{21}  & t^2 \Sigma_{22}
\end{pmatrix}\,.
\end{align*}
We choose $B_{11}$ and $B_{21}$  to satisfy
\begin{align*}
    & \Sigma^{1/2}_{11} B_{11} + B_{11}\Sigma_{11}^{1/2} = -\Sigma_{11}^{1/2}\Sigma_{21}^\top\Sigma_{21}\Sigma_{11}^{-1/2} \\
    &B_{21} = -\Sigma_{11}^{-1}\Sigma_{21}^\top S^{1/2}\,.
\end{align*}
Thus, $\| M^2  - (Q_t \Sigma Q_t)\|_{\text{op}} \lesssim t^3$, which is the first claim. For the second, note that
\begin{align*}
    Q_t^{-1} M = \begin{pmatrix}
        \Sigma_{11} + O(t)& t\Sigma_{21}^\top + O(t)\\
        \Sigma_{21}\Sigma_{11}^{-1/2}  & S^{1/2} + O(t) 
\end{pmatrix} = \mathsf{L}_\Sigma + 
\begin{pmatrix}
       O(t)& t\Sigma_{21}^\top + O(t)\\
        0 & O(t) 
\end{pmatrix}\,,
\end{align*}
and thus $\| Q_t^{-1}M - \mathsf{L}_\Sigma\|_{\text{op}} \lesssim t$.
\end{proof}

\end{document}